\DeclareMathOperator{\diag}{diag}
\DeclareMathOperator{\cov}{cov}
\DeclareMathOperator{\E}{E}
\DeclareMathOperator{\PMI}{PMI}
\DeclareMathOperator{\vol}{vol}
\DeclarePairedDelimiter\norm{\lVert}{\rVert}%
\DeclareMathOperator{\rank}{rank}
\DeclareMathOperator*{\argmin}{arg\,min}
\newtheorem{observation}{Observation}
\begin{document}
\title{A Broader Picture of Random-walk Based Graph Embedding}
\author{Zexi Huang}
\affiliation{%
 \institution{University of California}
 \city{Santa Barbara}
 \state{CA}
 \country{USA}
}
\email{zexi_huang@cs.ucsb.edu}

\author{Arlei Silva}
\affiliation{%
 \institution{Rice University}
 \city{Houston}
 \state{TX}
 \country{USA}}
\email{arleilps@gmail.com}

\author{Ambuj Singh}
\affiliation{%
 \institution{University of California}
 \city{Santa Barbara}
 \state{CA}
 \country{USA}}
\email{ambuj@cs.ucsb.edu}

\renewcommand{\shortauthors}{Huang, et al.}


\begin{abstract}
Graph embedding based on random-walks supports effective solutions for many graph-related downstream tasks. However, the abundance of embedding literature has made it increasingly difficult to compare existing methods and to identify opportunities to advance the state-of-the-art. Meanwhile, existing work has left several fundamental questions---such as how embeddings capture different structural scales and how they should be applied for effective link prediction---unanswered. This paper addresses these challenges with an analytical framework for random-walk based graph embedding that consists of three components: a random-walk process, a similarity function, and an embedding algorithm. Our framework not only categorizes many existing approaches but naturally motivates new ones. With it, we illustrate novel ways to incorporate embeddings at multiple scales to improve downstream task performance. We also show that embeddings based on autocovariance similarity, when paired with dot product ranking for link prediction, outperform state-of-the-art methods based on Pointwise Mutual Information similarity by up to 100\%.
\end{abstract}


\begin{CCSXML}
<ccs2012>
<concept>
<concept_id>10010147.10010257.10010293.10010319</concept_id>
<concept_desc>Computing methodologies~Learning latent representations</concept_desc>
<concept_significance>500</concept_significance>
</concept>
<concept>
<concept_id>10002951.10003260.10003282.10003292</concept_id>
<concept_desc>Information systems~Social networks</concept_desc>
<concept_significance>500</concept_significance>
</concept>
</ccs2012>
\end{CCSXML}

\ccsdesc[500]{Computing methodologies~Learning latent representations}
\ccsdesc[500]{Information systems~Social networks}

\keywords{Graph representation learning; Node embedding; Random-walk}

\maketitle
\section{Introduction}

Random-walk based graph embedding enables the application of classical algorithms for high-dimensional data to graph-based downstream tasks (e.g., link prediction, node classification, and community detection). These embedding methods learn vector representations for nodes based on some notion of topological similarity (or proximity). Since DeepWalk \cite{perozzi2014deepwalk}, we have witnessed a great interest in graph embedding both by researchers and practitioners. Several regular papers \cite{grover2016node2vec,ou2016asymmetric,qiu2018network,schaub2019multiscale,tang2015line,qiu2019netsmf,zhou2017scalable,khosla2019node,tsitsulin2018verse,cao2015grarep,chen2017harp,donnat2018learning,xin2019marc,chanpuriya2020infinitewalk} and a few surveys \cite{hamilton2017representation,cui2018survey,cai2018comprehensive,goyal2018graph,chami2020machine} have attempted to not only advance but also consolidate our understanding of these models. However, the abundance of literature also makes it increasingly difficult for practitioners and newcomers to the field to compare existing methods and to contribute with novel ones.

On the other hand, despite the rich literature, several fundamental questions about random-walk based graph embedding still remain unanswered. One such question is \textit{(Q1) how do embeddings capture different structural scales?} Random-walks of different lengths are naturally associated with varying scales \cite{delvenne2010stability}. However, downstream task performance of embeddings has been shown to be insensitive to random-walk lengths \cite{perozzi2014deepwalk, grover2016node2vec}. 
Another relevant question is \textit{(Q2) how should random-walk embeddings be used for link prediction?} Following node2vec \cite{grover2016node2vec}, several works train classifiers to predict missing links based on a set of labelled pairs (edges and non-edges) \cite{wang2017signed, nguyen2018continuous, javari2020rose}. This is counter-intuitive given that the embedding problem is often defined in terms of dot products. 
In fact, dot products are sometimes also applied for link prediction and for the related task of network reconstruction, where the entire graph is predicted based on the embeddings \cite{wang2016structural,ou2016asymmetric,goyal2018graph,zhang2018arbitrary}.

With these questions in mind, we shall take a closer look at how embeddings are produced in random-walk based methods. It starts with selecting a \emph{random-walk process}. DeepWalk \cite{perozzi2014deepwalk} applies standard random-walks, while node2vec \cite{grover2016node2vec} considers biased random-walks and APP \cite{zhou2017scalable} adopts rooted PageRank, among others. Then, a \emph{similarity function} maps realizations of the random-walk process into real values that represent some notion of node proximity. Most existing methods rely on the skip-gram language model \cite{mikolov2013distributed}, which has been shown to capture the Pointwise Mutual Information (PMI) similarity \cite{levy2014neural}. Finally, an \emph{embedding algorithm} is used to generate vector representations that preserve the similarity function via optimization. This can be based on either sampled random-walks and gradient descent (or one of its variations) \cite{perozzi2014deepwalk,grover2016node2vec}, or (explicit) matrix factorization (e.g., Singular Value Decomposition) \cite{qiu2018network,qiu2019netsmf}.

This breakdown of random-walk based embedding methods allows us to build a simple, yet powerful analytical framework with three major components: \textit{random-walk process}, \emph{similarity function}, and \textit{embedding algorithm}. As shown in \autoref{fig:embedding_methods}, our framework both categorizes existing approaches and facilitates the exploration of novel ones. 
For example, we will consider embeddings based on the autocovariance similarity, which has been proposed for multiscale community detection \cite{delvenne2010stability,delvenne2013stability}, and compare it against the more popular PMI similarity on multiple downstream tasks. 

\begin{figure*}
    \centering
    \tikzstyle{component}  = [rectangle, text centered,  text width=5em]
    \tikzstyle{arrow} =  [draw, ->, >=stealth]
    \tikzstyle{line} =[draw, ]
        \begin{tikzpicture}[
                    level 1/.style={sibling distance=26em, level distance=5em},
                    level 2/.style={sibling distance=13em, level distance=5em}, 
                    level 3/.style={sibling distance=5.5em, level distance=6em},
                    parent anchor=south,
                    child anchor=north,
                    align=center,
                    ]
        \node{Random-walk based embedding methods}
            child { 
                child {
                    child {
                        node {DeepWalk\cite{perozzi2014deepwalk}\\ 
                        Walklets\cite{perozzi2017don}\\
                        LINE\cite{tang2015line}}
                        edge from parent node(E)[left,draw=none]{Sampling}}
                    child {
                        node {NetMF\cite{qiu2018network}\\ NetSMF\cite{qiu2019netsmf}\\
                        InfiniteWalk\cite{chanpuriya2020infinitewalk}}
                        edge from parent node[right,draw=none]{Factorization}}
                    edge from parent node(S)[left=5,draw=none]{PMI}}
                child {
                    child {
                        node {\textbf{This paper}}
                        edge from parent node[left,draw=none]{Sampling}}
                    child {
                        node {Multiscale\cite{schaub2019multiscale}\\\textbf{This paper}}
                        edge from parent node[right,draw=none]{Factorization}}
                    edge from parent node[right=5,draw=none]{Autocovariance}}
                edge from parent node(RW)[left=10,draw=none]{Standard}}
            child {
                child {
                    child {
                        node {node2vec\cite{grover2016node2vec}\\APP\cite{zhou2017scalable}\\NERD\cite{khosla2019node}\\ \textbf{This paper}}
                        edge from parent node[left,draw=none]{Sampling}}
                    child {
                        node {NetMF\cite{qiu2018network}\\ \textbf{This paper}}
                        edge from parent node[right,draw=none]{Factorization}}
                    edge from parent node[left=5,draw=none]{PMI}}
                child {    
                    child {
                        node {\textbf{This paper}}
                        edge from parent node[left,draw=none]{Sampling}}
                    child {
                        node {\textbf{This paper}}
                        edge from parent node[right,draw=none]{Factorization}}
                    edge from parent node[right=5,draw=none]{Autocovariance}}
                edge from parent node[right=10,draw=none]{Non-standard}};

        \node (ET) [component, left of=E, node distance=5.2em,rotate=90] {Algorithm};
        \node (SM) [component, above of=ET, node distance=5.1em,rotate=90] {Similarity};
        \node (RWP) [rectangle, text centered, text width=6em, above of=SM, node distance=5em,rotate=90] {Process};
        \end{tikzpicture}
    \caption{Different random-walk based embedding methods (old and new) classified according to our analytical framework---with process, similarity, and algorithm as main components. A key contribution of this paper is to integrate autocovariance as a similarity metric and show that it outperforms Pointwise Mutual Information (PMI) in link prediction.} 
    \label{fig:embedding_methods}
\end{figure*}
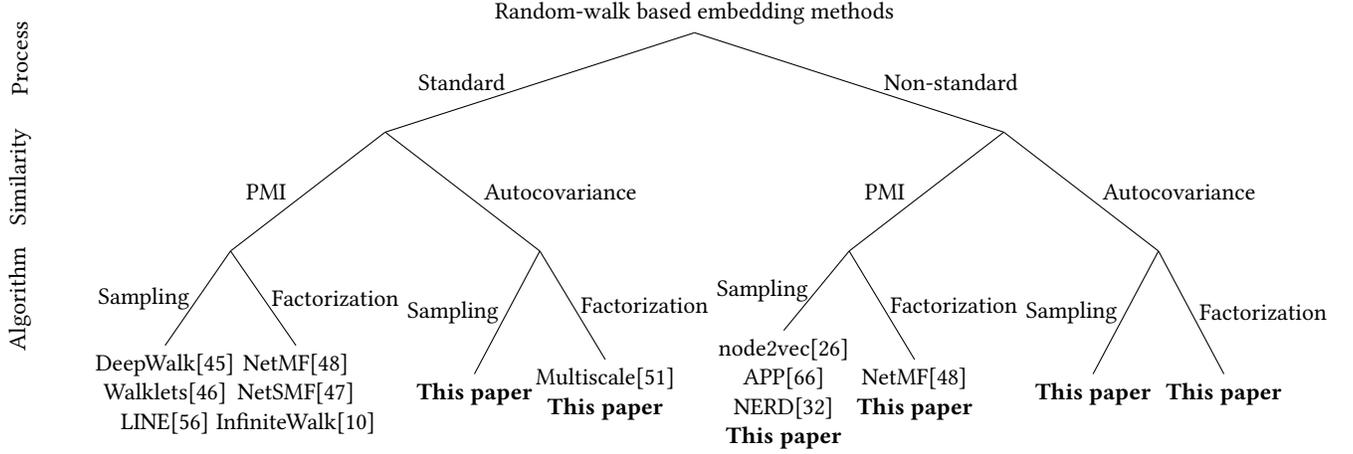

Our framework also provides tools to answer the aforementioned questions. For Q1, we will not only illustrate how past work implicitly combines multiple scales and \emph{its implications to node-level tasks}, but also propose novel ways to incorporate scales to \emph{improve edge-level task performance}. 
To answer Q2, we will show that to optimize performance, \emph{embedding methods should be designed with task settings in mind}. Specifically, we find that embeddings based on autocovariance, when paired with dot products, lead to a two-fold improvement over existing methods. Our analysis shows the reason to be that the particular combination enables the embedding to capture heterogeneous degree distributions \cite{barabasi2003scale} in real graphs.
One could argue that link prediction is the most relevant downstream task for (positional) node embeddings, as graph neural networks often outperform embedding approaches in node classification \cite{srinivasan2019equivalence}. 

To summarize, the main contributions of our paper are:
\begin{itemize}
    \item We present a unified view of random-walk based graph embedding, incorporating different random-walk processes, similarity functions, and embedding algorithms. 
    
    \item We show how autocovariance can be applied as a similarity function to create novel embeddings that
    outperform state-of-the-art methods using PMI by up to 100\%.
    
    \item We illustrate ways to exploit the multiscale nature of random-walk similarity to further optimize embedding performance.
    
    \item We conduct an extensive experimental evaluation of our framework on various downstream tasks and provide theoretical insights behind the results. 
\end{itemize}

\section{Method}
\label{sec::emb_undirected}

Consider an undirected weighted graph $\mathcal{G}=(\mathcal{V}, \mathcal{E})$, where $\mathcal{V}=\{1, \ldots n\}$ denotes the set of $n$ nodes and $\mathcal{E}$ denotes the set of $m$ edges. The graph is represented by a weighted symmetric adjacency matrix $A\in \mathbb{R}^{n\times n}$, with $A_{uv}>0$ if an edge of weight $A_{uv}$ connects nodes $u$ and $v$, and $A_{uv}=0$, otherwise. The (weighted) degree of node $u$ is defined as $\deg(u) = \sum_{v} A_{uv}$. 

A node embedding is a function $\phi: \mathcal{V} \mapsto \mathbb{R}^d$ that maps each node $v$ to a $d$-dimensional ($d \ll n$) vector $\mathbf{u}_v$. 
We refer to the embedding matrix of $\mathcal{V}$ as $U = (\mathbf{u}_1, \dotsc, \mathbf{u}_n)^T \in \mathbb{R}^{n\times d}$. For some embedding algorithms, another embedding matrix $V \in \mathbb{R}^{n\times d}$ is also generated. 
Random-walk based embedding methods use a random-walk process to embed nodes $u$ and $v$ such that a similarity metric is preserved by dot products $\mathbf{u}_u^T\mathbf{u}_v^{}$ (or $\mathbf{u}_u^T\mathbf{v}_v^{}$). In the next section, we will formalize random-walks on graphs.

\subsection{Random-walk Process}
\label{subsec::random-walk}
A random-walk is a Markov chain over the set of nodes $\mathcal{V}$. The transition probability of the walker jumping to node $v$ is based solely on its previous location $u$ and is characterized by the adjacency matrix $A$. For the \textit{standard random-walk process}, the transition probability is proportional to the edge weight $A_{uv}$:
\begin{equation}
    p(x(t{+}1){=}v|x(t){=}u)= \frac{A_{uv}}{\deg(u)}
\end{equation}
where $x(t) \in \mathcal{V}$ is the location of the walker at time $t$.  

Transition probabilities between all pairs of nodes are represented by a transition matrix $M\in \mathbb{R}^{n\times n}$: 
\begin{equation}
    M = D^{-1}A
\end{equation}
where $D = \diag([\deg(1), \dotsc, \deg(n)]) \in \mathbb{R}^n$ is the degree matrix.

For a connected non-bipartite graph, the standard random-walk is ergodic and has a unique stationary distribution $\pi \in \mathbb{R}^n$:
\begin{equation}
    \pi_u = \frac{\deg(u)}{\sum_{v} \deg(v)}
\end{equation}

Standard random-walks provide a natural way to capture node neighborhood in undirected connected graphs. One can also design \textit{biased random-walks} to explore different notions of neighborhood \cite{grover2016node2vec}. For directed graphs, a \textit{PageRank process} \cite{page1999pagerank} is often applied in lieu of standard random-walks to guarantee ergodicity. 

\subsection{Similarity Function}
\label{sec::undirected_similarity}
A node similarity metric is a function $\varphi: \mathcal{V}\times \mathcal{V} \mapsto \mathbb{R}$ that maps pairs of nodes to some notion of topological similarity. Large positive values mean that two nodes are similar to each other, while large negative values indicate they are dissimilar. 

Random-walk based similarity functions are based on co-visiting probabilities of a walker. 
An important property of random-walks that has been mostly neglected in the embedding literature is their ability to capture similarity at different structural scales (e.g., local vs. global). This is achieved via a Markov time parameter $\tau \in \mathbb{Z}_+$, which corresponds to the distance between a pair of nodes in a walk in terms of jumps. One of the contributions of this paper is to show the effect of different Markov time scales on the embedding and ways to exploit them to optimize downstream task performance.

We will describe two random-walk based similarity functions in this section, \emph{Pointwise Mutual Information (PMI)} and \emph{autocovariance}. PMI has become quite popular in the graph embedding literature due to its (implicit) use by word2vec \cite{mikolov2013distributed,levy2014neural} and DeepWalk \cite{perozzi2014deepwalk,qiu2018network}. On the other hand, autocovariance is more popular in the context of multiscale community detection \cite{delvenne2010stability,delvenne2013stability}. As one of the contributions of this paper, we will demonstrate the effectiveness of autocovariance based embeddings on edge-level tasks. 
\subsubsection{Pointwise Mutual Information (PMI)} Denote $X_v(t) \in \{0,1\}$ as the event indicator of $x(t)=v$, which can be true (1) or false (0). The PMI between events $X_u(t) =1$ and $X_v(t+\tau) = 1$ is defined as:
\begin{equation}
    \begin{aligned}
    R_{uv}(\tau) &= \PMI (X_u(t) =1, X_v(t+\tau) = 1)\\
    &= \log \frac{p(X_u(t) =1, X_v(t+\tau) = 1)}{p(X_u(t)=1)p(X_v(t+\tau)=1)}
    \end{aligned}
\end{equation}
PMI provides a non-linear information theoretic view of random-walk based proximity. For an ergodic walk with the stationary distribution $\pi$ as starting probabilities, PMI can be computed based on $\pi$ and the $\tau$-step transition probability:
\begin{equation}
    R_{uv}(\tau) = \log (\pi_u p(x(t{+}\tau){=}v|x(t){=}u)) - \log (\pi_u \pi_v)
\end{equation}
where $R_{uv}(\tau)\in [-\infty,-\log(\pi_v)]$. In matrix form:
\begin{equation}
    R(\tau) = \log (\Pi M^{\tau}) - \log(\pi \pi^T)
    \label{eqn::pmi_matrix}
\end{equation}
where $\Pi = \diag(\pi) \in \mathbb{R}^{n \times n}$ and $\log(\cdot)$ is the element-wise logarithm. The PMI matrix is symmetric due to the time-reversibility of undirected random-walks.

It is noteworthy that LINE \cite{tang2015line} and DeepWalk \cite{perozzi2014deepwalk}---two random-walk embedding methods---implicitly factorize PMI, as follows:
\begin{align}
    R_{LINE} =& R(1) - \log b \\
    R_{DW} =& \log \left(\frac{1}{T} \sum_{\tau=1}^T \exp(R(\tau))\right) - \log b
\end{align}
where $b$ and $T$ are the number of negative samples and context window size, respectively, and $\exp(\cdot)$ is the elementwise exponential. The proof of these facts is included in the Appendix. As we see, LINE preserves the PMI similarity at Markov time $1$, while DeepWalk factorizes the \emph{log-mean-exp}---a smooth approximation of the average---of the PMI matrices from Markov time $1$ to $T$.  

\subsubsection{Autocovariance} 
The autocovariance is defined as the covariance of $X_u(t)$ and $X_v(t+\tau)$ within a time interval $\tau$:
\begin{equation}
    \begin{aligned}
    R_{uv}(\tau) &= \cov(X_u(t), X_v(t+\tau)) \\
    &= \E \left[\left(X_u(t)-\E[X_u(t)]\right)\left(X_v(t+\tau)-\E[X_v(t+\tau)]\right)\right]
    \end{aligned}
    \label{eqn::autocovariance}
\end{equation}
The value of $R_{uv}(\tau)$ is a linear measure of the joint variability of the walk visiting probabilities for nodes $u$ and $v$ within time $\tau$.  
Similar to PMI, for an ergodic walk and starting probabilities $\pi$: 
\begin{equation}
    R_{uv}(\tau) = \pi_u  p(x(t{+}\tau){=}v|x(t){=}u) - \pi_u \pi_v
    \label{eqn::autocov_pair}
\end{equation}
where $R_{uv}(\tau)\in [-\pi_u\pi_v,\pi_u(1-\pi_v)]$. In the matrix form:
\begin{equation}
    R(\tau) = \Pi M^{\tau} - \pi \pi^T
    \label{eqn::autocov_matrix}
\end{equation}

The autocovariance (\autoref{eqn::autocov_matrix}) and PMI (\autoref{eqn::pmi_matrix}) matrices share a very similar form, differing only by the logarithm operation. However, this distinction has broad implications for graph embedding. PMI is a non-linear metric closely related to the \textit{sigmoid function}, which is a quite popular activation in deep learning, while autocovariance is related to a \textit{piecewise linear function}---see Section \ref{sec::undirected_sampling} for details.
We will compare PMI and autocovariance in multiple tasks in Section \ref{sec::experiments} and provide theoretical and empirical insights on their different performance in Section \ref{sec::insight}.

\subsection{Embedding Algorithm}
The goal of an embedding algorithm is to generate vector representations that preserve a given similarity metric:
\begin{equation}
\begin{aligned}
U^* &= \argmin_U \sum_{u,v} (\mathbf{u}_u^T\mathbf{u}_v^{} - R_{uv})^2\\
&= \argmin_U \norm{UU^T - R}_F^2
\end{aligned}
\label{eqn::obj_und_embed}
\end{equation}
where $\mathbf{u}^T_u\mathbf{u}_v^{}$ captures similarity in the embedding space, $R$ is a similarity matrix, and $\norm{\cdot}_F$ is the Frobenius norm. 

In the following sections, we discuss two different techniques to optimize this embedding objective.

\subsubsection{Matrix factorization}
Matrix factorization is an explicit way to optimize the embedding. 
It generates a low-rank approximation of $R$ as $UU^T$ with $\rank(UU^T) = d$. Because $R$ is symmetric, from the Eckart-Young-Mirsky theorem \cite{eckart1936approximation}, the optimal $U^*=Q_d\sqrt{\Lambda_d}$, where $R=Q\Lambda Q^T$ is the Singular Value Decomposition (SVD) of $R$. Notice that, different from classical spectral approaches \cite{chung1997spectral}, factorization-based embedding is not based on the graph Laplacian.

Direct SVD of $R$ has a complexity of $O(n^3)$, which is infeasible for most applications. However, scalable factorization is possible for sparse graphs. For autocovariance, one can apply the Lanczos method \cite{lehoucq1998arpack}, which only requires sparse matrix-vector multiplications. This approach reduces the complexity to $O(nd^2+md\tau)$. For PMI, as discussed in \cite{qiu2019netsmf}, one can construct the spectral sparsifier of the similarity matrix and apply Randomized SVD \cite{halko2011finding}. The resulting complexity for this method is $O(\widetilde{m}\tau \log n + \widetilde{m}d + nd^2 + d^3)$, where $\widetilde{m}=O(m\tau)$ is the number of non-zeros in the sparsifier. 

\subsubsection{Sampling}
\label{sec::undirected_sampling}
Sampling-based algorithms produce embeddings that implicitly optimize \autoref{eqn::obj_und_embed} by maximizing the likelihood of a corpus $\mathcal{D}$ generated based on samples from the process. A sample $i$ is a random-walk sequence $\langle v_1^{(i)}, v_2^{(i)}, \ldots v_L^{(i)}\rangle$ of length $L$ with the initial node $v_1^{(i)}$ selected according to a distribution $p(v)$---we will assume that $p(v)=\pi_v$ for the remainder of this section. From each sample, we extract pairs $(v_t^{(i)},v_{t+\tau}^{(i)})$, where $\tau$ is the Markov time scale. Thus, $\mathcal{D}$ is a multiset of node pairs $(u,v)$.

Different from matrix factorization, sampling algorithms produce two embedding matrices, $U$ and $V$, the source and target embeddings, respectively. The use of two embeddings was initially proposed by word2vec \cite{mikolov2013distributed} and is considered a better alternative. We will focus our discussion on algorithms that exploit \textit{negative sampling}, with $b$ negative samples, to efficiently generate embeddings. Let $z$ be a random variable associated with pairs $(u,v)$ such that $z=1$ if $(u,v)$ appears in $\mathcal{D}$ and $z=0$, otherwise. The log-likelihood $\ell$ of the corpus $\mathcal{D}$ can be expressed as:
\begin{equation}
\ell = \sum_{u,v}\!\#(u,v)(
\log(p(z\!=\!1|u,v))\!+\!b\mathbb{E}_{w\sim \pi}{}\log(p(z\!=\!0|u,w)))
\label{eqn::sampling_likelihood}
\end{equation}
where $\#(u,v)$ is the number of occurrences of the pair $(u,v)$ in $\mathcal{D}$.

The form of the conditional probability function $p(z|u,v)$ is determined by the similarity function. For instance, in the case of PMI, $p(z|u,v)$ is known to take the form of the \textit{sigmoid} function: $\sigma(x)=1/(1+\exp(x))$ \cite{qiu2018network}. More specifically, $p(z=1|u,v) = \sigma(\mathbf{u}_u^T\mathbf{v}^{}_v)$ and $p(z=0|u,v)  = \sigma(-\mathbf{u}_u^T\mathbf{v}_v^{})$. Here, we show how the objective in \autoref{eqn::sampling_likelihood} can be maximized for the autocovariance similarity. First, we define the following conditional probability function:
\begin{equation}
p(z=1|u,v) = \rho\left( \frac{\mathbf{u}_u^T\mathbf{v}_v+\pi_u\pi_v}{\mathbf{u}_u^T\mathbf{v}_v+(b+1)\pi_u\pi_v}\right)
\label{eqn::density_autocov_one}
\end{equation}

\begin{equation}
p(z=0|u,v) =  \rho\left(\frac{\pi_u\pi_v}{\mathbf{u}_u^T\mathbf{v}_v^{}+(b+1)\pi_u\pi_v}\right)
\label{eqn::density_autocov_zero}
\end{equation}
where $\rho(x)$ is a \textit{piecewise linear activation} with $\rho(x) = 0$, if $x < 0$, $\rho(x) = x,$ if $0 \leq x \leq 1$, and $\rho(x) = 1$, otherwise.

The following theorem formalizes the connection between the above defined conditional probability and autocovariance:
\begin{theorem}
For a large enough dimensionality $d$ ($d=\Omega(n)$), the embedding that maximizes \autoref{eqn::sampling_likelihood} with a conditional probability given by \autoref{eqn::density_autocov_one} and \autoref{eqn::density_autocov_zero} is such that:
\begin{equation}
      \mathbf{u}_u^T\mathbf{v}_v^{}= \frac{1}{b}\pi_u  p(x(t{+}\tau){=}v|x(t){=}u)) - \pi_u\pi_v
      \label{eqn::auto_cov_pair_neg}
\end{equation}
\label{thm::sampling_stab}
\end{theorem}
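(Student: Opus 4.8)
The plan is to follow the Levy--Goldberg / NetMF-style argument that converts a negative-sampling likelihood into a closed-form matrix-factorization target. The starting point is the role of the dimension assumption: since $d=\Omega(n)$ (in particular $d\ge n$), the product $UV^T$ can realize \emph{any} matrix in $\mathbb{R}^{n\times n}$, so maximizing $\ell$ over the pair $(U,V)$ is equivalent to maximizing it over all $X\in\mathbb{R}^{n\times n}$ under the identification $X_{uv}=\mathbf{u}_u^T\mathbf{v}_v$. Because the right-hand side of \autoref{eqn::sampling_likelihood} is a sum of terms, each depending on a single entry $X_{uv}$, the optimization decouples into $n^2$ independent scalar problems, one per ordered pair $(u,v)$.

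First I would isolate the scalar objective for a fixed pair $(u,v)$. Collecting from \autoref{eqn::sampling_likelihood} every term that involves $x:=X_{uv}=\mathbf{u}_u^T\mathbf{v}_v$ gives a positive contribution $\#(u,v)\log p(z{=}1|u,v)$ together with a negative contribution of weight proportional to $b\,\pi_v\,\#(u)$ --- the node $v$ appears as a noise node with probability $\pi_v$ in each of the $b$ draws attached to each of the $\#(u)$ occurrences of $u$ as a left endpoint of a corpus pair. Next I would pass to the large-corpus limit: since the walk is started from the stationary distribution $\pi$, which is invariant, the corpus statistics concentrate, $\#(u,v)\to|\mathcal{D}|\,\pi_u\,p(x(t{+}\tau){=}v\mid x(t){=}u)$ and $\#(u)\to|\mathcal{D}|\,\pi_u$. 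Substituting the closed forms \autoref{eqn::density_autocov_one} and \autoref{eqn::density_autocov_zero} for $p(z{=}1|u,v)$ and $p(z{=}0|u,v)$ then turns the per-pair objective into an explicit one-variable function of $x$.

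The remaining work is to maximize that function. I would first argue that the maximizer lies on the middle, linear branch of $\rho$ (where $\rho(t)=t$): on the branch $\rho\equiv 0$ the logarithm diverges to $-\infty$ so it is excluded, and a short computation shows that for the relevant range of $x$ both arguments $\tfrac{x+\pi_u\pi_v}{x+(b+1)\pi_u\pi_v}$ and $\tfrac{\pi_u\pi_v}{x+(b+1)\pi_u\pi_v}$ lie in $(0,1)$. On that branch the per-pair objective has the form $\alpha\log(x+a)-(\alpha+\beta)\log\bigl(x+(b+1)a\bigr)+\text{const}$ with $a=\pi_u\pi_v$, $\alpha\propto\#(u,v)$, $\beta\propto b\,\pi_v\,\#(u)$; setting its derivative to zero reduces to a single linear equation in $x$, and solving it after substituting the concentrated values of $\alpha$ and $\beta$ collapses the expression to the claimed identity \autoref{eqn::auto_cov_pair_neg}. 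The $\tfrac1b$ prefactor there is the residue of the $b$-fold negative sampling, exactly the analogue of the $-\log b$ shift appearing in the DeepWalk$\leftrightarrow$PMI correspondence.

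I expect the main obstacle to be the non-smoothness of $\rho$ rather than the calculus. Concretely, one must (i) identify, in terms of $\pi_u$, $\pi_v$ and $b$, the exact sub-interval of $x$ on which both $\rho$-arguments land in $[0,1]$ so that $\rho$ acts as the identity; (ii) verify that the stationary point found on that sub-interval is a \emph{global} maximizer of the full piecewise objective, which means separately handling the flat branches of $\rho$ (the branch $\rho\equiv 0$ is already ruled out, but the branch $\rho\equiv 1$ must be compared directly); and (iii) check that the critical point indeed falls inside that sub-interval, which is where ergodicity of the walk ($p(x(t{+}\tau){=}v\mid x(t){=}u)>0$, with the zero case treated on the boundary) is used. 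Everything else --- the entrywise decoupling, the law of large numbers for $\#(u,v)$ and $\#(u)$, and the final algebraic simplification --- is routine.
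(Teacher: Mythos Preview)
Your proposal is correct and follows essentially the same Levy--Goldberg/NetMF route as the paper: use $d=\Omega(n)$ to decouple the likelihood into independent per-pair objectives, substitute the conditional probabilities from \autoref{eqn::density_autocov_one}--\autoref{eqn::density_autocov_zero}, differentiate the resulting one-variable function, and solve the linear first-order condition to recover \autoref{eqn::auto_cov_pair_neg}. You are in fact more careful than the paper about the piecewise activation $\rho$ (the paper simply notes that it ``conveniently dropped the function $\rho$'' when differentiating), while the paper additionally includes a short Bayes-rule derivation motivating the form of \autoref{eqn::density_autocov_one}--\autoref{eqn::density_autocov_zero} in the special case $b=1$, which you omit but which is not needed for the theorem itself.
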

The proof of the theorem is provided in the Appendix. While here we only show the sampling-based algorithm for autocovariance, previous work has given a similar proof for PMI \cite{qiu2018network}.

We minimize the likelihood from \autoref{eqn::sampling_likelihood} using gradient descent. The time complexity of the sampling algorithms (for PMI and autocovariance) is $O(|\mathcal{D}|b)$, where $|\mathcal{D}|$ is the size of the corpus and $b$ is the number of negative samples---in practice, $|\mathcal{D}|=O(n)$.

\section{Experiments}
\label{sec::experiments}
In this section, we evaluate several random-walk based graph embedding methods defined according to our analytical framework on various downstream tasks and datasets. 
\begin{table}[htbp]
  \centering
  \caption{An overview of the datasets.}
    \begin{tabular}{cccc}
    \toprule
          & $|\mathcal{V}|$     & $|\mathcal{E}|$     & labels \\
    \midrule
    \textsc{BlogCatalog} & 10,312 & 333,983 & interests \\
    \textsc{Airport} & 3,158 & 18,606 & countries/continents \\
    \textsc{Wiki-words} & 4,777 & 92,157 & tags\\
    \textsc{PoliticalBlogs} & 1,222 & 16,717 & ideologies\\
    \textsc{Cora}  & 23,166 & 91,500 & categories/subcategories\\
    \textsc{Wiki-fields} & 10,675 & 137,606 & fields/subfields \\
    \bottomrule
    \end{tabular}%
  \label{tab::datasets}%
\end{table}%
\subsection{Dataset}
We apply six datasets (see \autoref{tab::datasets}) in our experiments.   
\begin{itemize}[leftmargin=*]
    \item \textsc{BlogCatalog} \cite{tang2009relational}: Undirected social network of bloggers with (multi) labels representing topics of interest.
    \item \textsc{Airport}: Flight network among global airports (from OpenFlights \cite{openflight}). Edges are weighted by the number of flights through the corresponding air routes. Labels represent the countries and continents where airports are located. The largest undirected connected component of the network is used in our experiments.
    \item \textsc{Wiki-words} \cite{mahoney2011large}. Undirected co-occurrence network of words in the first million bytes of the Wikipedia dump. Labels are Part-of-Speech (POS) tags inferred by the Stanford POS-Tagger. 
    \item \textsc{PoliticalBlogs} \cite{adamic2005political}. Network of hyperlinks between weblogs on US politics, recorded in 2005. Labels represent political ideologies (conservative vs liberal). The largest undirected connected component of the network is used in our experiments.
    \item \textsc{Cora} \cite{vsubelj2013model}: Directed citation network with categories (e.g., AI) and subcategories (e.g, Knowledge Representation) as labels. 
  \item \textsc{Wiki-fields}: Subset of the Wikipedia directed hyperlink network \cite{aspert2019graph} covering Computer Science, Mathematics, Physics and Biology for the year 2019. Fields and subfields are used as labels.
\end{itemize}
\subsection{Experiment Setting}
We evaluate embedding methods on three downstream tasks:
\begin{itemize}[leftmargin=*]
    \item Node classification. We follow the same procedure as \cite{perozzi2014deepwalk}. For each dataset, we randomly sample a proportion of node labels for training and the rest for testing. We use one-vs-rest logistic regression implemented by LIBLINEAR \cite{fan2008liblinear}
    for multi-class classification (\textsc{Airport} and \textsc{Cora}) and multi-label classification (\textsc{BlogCatalog}, \textsc{Wiki-words}, and \textsc{Wiki-fields}). To avoid the thresholding effect \cite{tang2009large} in the multi-label setting, we assume that the number of true labels for each node is known. Performance is reported as average \emph{Micro-F1} and \emph{Macro-F1} \cite{tsoumakas2009mining} for 10 repetitions. 
    \item Link prediction. We randomly remove $20\%$ of edges while ensuring that the residual graph is still connected and embed the residual graph. Edges are predicted as the top pairs of nodes ranked by either dot products of embeddings or classification scores from a logistic regression classifier with the concatenation of embeddings as input. We report \emph{precision@k} \cite{lu2011link} as the evaluation metric and also use \emph{recall@k} in our analysis, where $k$ is the number of top pairs, in terms of the ratio of removed edges. 
    \item Community detection. We use k-means---with k-means++ initialization \cite{arthur2006k}---for community detection, with the number of clusters set to be the actual number of communities. Normalized Mutual Information (\emph{NMI}) \cite{strehl2002cluster} between predicted and true communities is used as the evaluation metric. 
\end{itemize}
For all experiments, the number of embedding dimensions is set to $d=128$. When searching for the best Markov time, we sweep $\tau$ from 1 to 100. An implementation of our framework is available at \url{https://github.com/zexihuang/random-walk-embedding}.
\subsection{Results}
In this section, we evaluate how different components of the embedding methods affect their performance. 
\begin{figure*}[htbp]
  \centering
  \includegraphics[width=\textwidth]{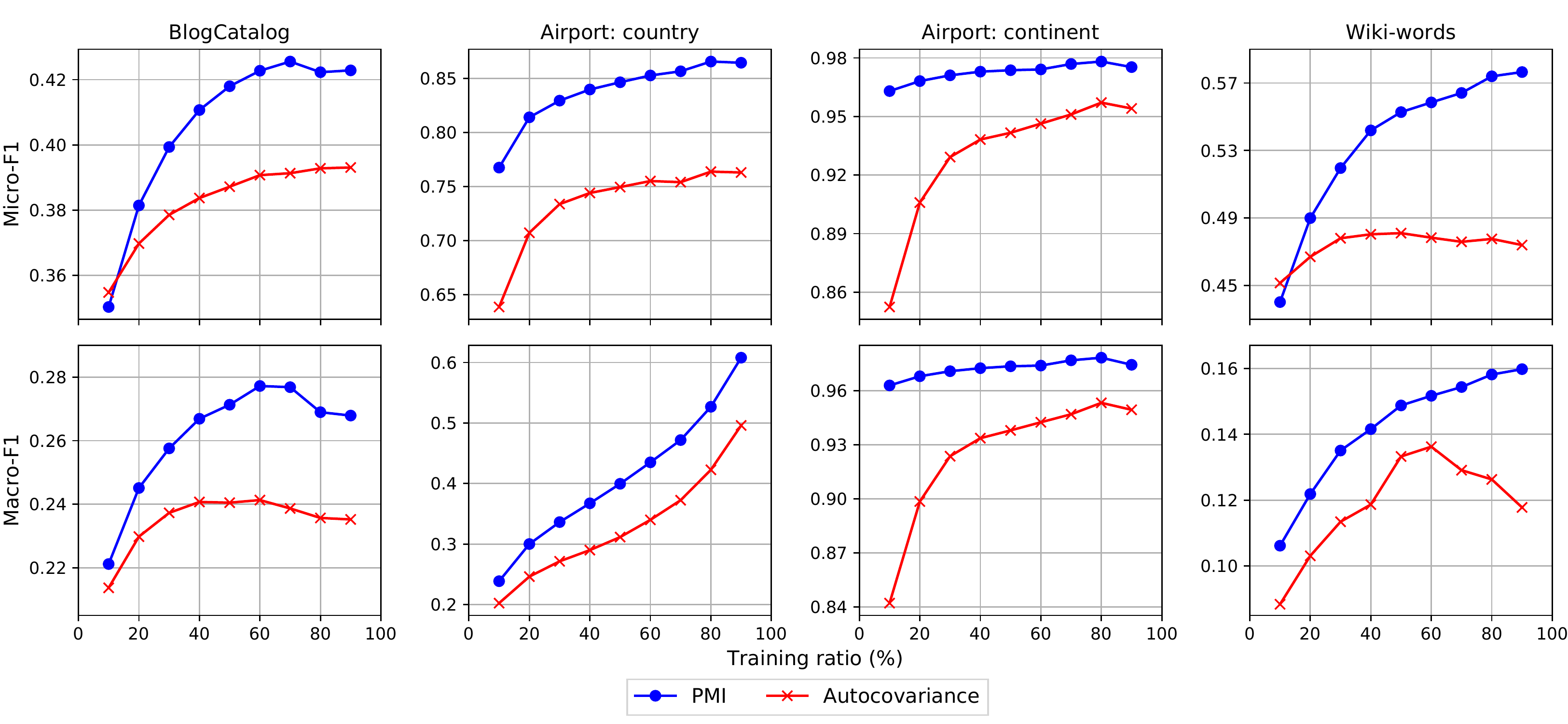}
  \caption{Node classification performance comparison between PMI and autocovariance on varying training ratios. PMI consistently outperforms autocovariance in all datasets. }
  \label{fig::ac_vs_pmi_classification}
\end{figure*}
\begin{figure*}[htbp]
  \centering
  \includegraphics[width=\textwidth]{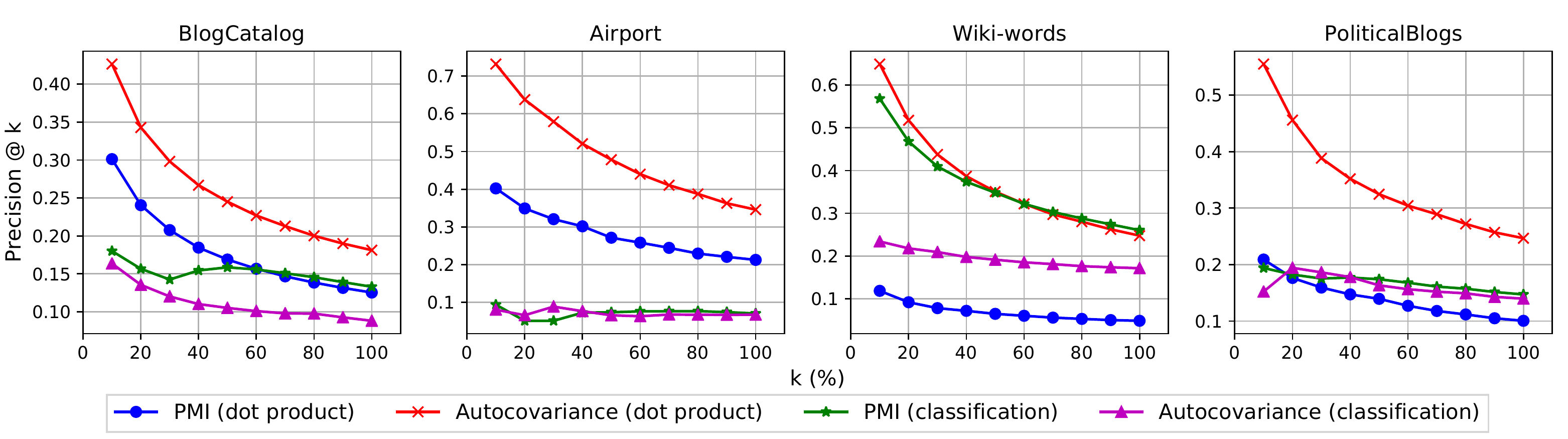}
  \caption{Link prediction performance comparison between PMI and autocovariance on varying percentages of top predictions ($k$). Autocovariance with dot product ranking consistently outperforms PMI (with either ranking scheme) in all datasets.}
  \label{fig::ac_vs_pmi_link_prediction}
\end{figure*}
\subsubsection{PMI vs autocovariance}
\label{subsubsec::pmi_vs_autocovariance}

We apply standard random-walks and the matrix factorization algorithm and analyze the difference between PMI and autocovariance similarity. 

\autoref{fig::ac_vs_pmi_classification} shows the node classification performance on undirected datasets (excluding \textsc{PoliticalBlog} as it is a simple binary classification task). We select the Markov time $\tau$ with the best performance for the 50\% training ratio. Results show that PMI consistently outperforms autocovariance for both \emph{Micro-F1}/\emph{Macro-F1} scores. The average gain is 6.0\%/11.3\% for \textsc{BlogCatalog}, 14.2\%/24.5\% and 4.6\%/5.2\% for \textsc{Airport} with country and continent labels, and 12.9\%/20.0\% for \textsc{Wiki-words}. This is a piece of evidence that PMI, which is non-linear, is more effective at node-level tasks.

\autoref{fig::ac_vs_pmi_link_prediction} shows the results for link prediction, where we select the best Markov time for $k=100\%$. Autocovariance with dot product ranking consistently outperforms PMI with either ranking scheme in all datasets. The average gains over the best ones are 44.1\% in \textsc{BlogCatalog}, 72.9\% in \textsc{Airport}, 2.2\% in \textsc{Wiki-words}, and 101.4\% in \textsc{PoliticalBlogs}. To the best of our knowledge, we are the first to observe the effectiveness of autocovariance on edge-level tasks.

While both dot product-based and classification-based link prediction have been widely used in the embedding literature, our results show that dot product is a clearly superior choice for autocovariance embedding. It also leads to better or similar performance for PMI embedding for all but the \textsc{Wiki-words} dataset. Thus, we will only show results based on dot products in later experiments.  

Results in this section beg the deeper question of why specific similarities and ranking schemes lead to better performance. We will provide theoretical and empirical insights on this in Section \ref{sec::insight}.  

\begin{figure*}[htbp]
  \centering
  \includegraphics[width=\textwidth]{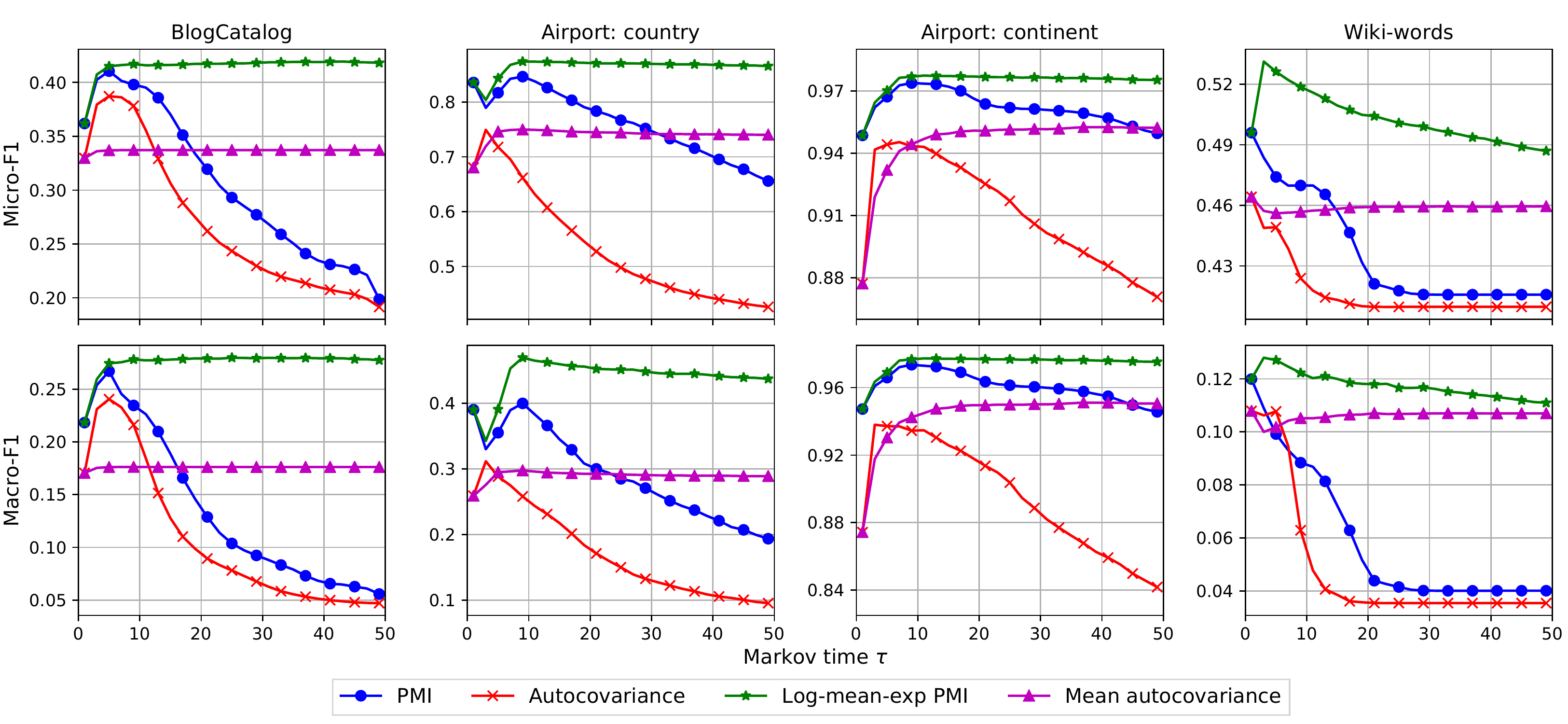}
  \caption{Node classification results for PMI, autocovariance, and their moving means on varying Markov times. While both means stabilize the performance for large Markov times, only \emph{log-mean-exp} PMI consistently increases the peak performance.}
  \label{fig::multiscale_classification}
\end{figure*}
\begin{figure}[htbp]
  \centering
  \includegraphics[width=\columnwidth]{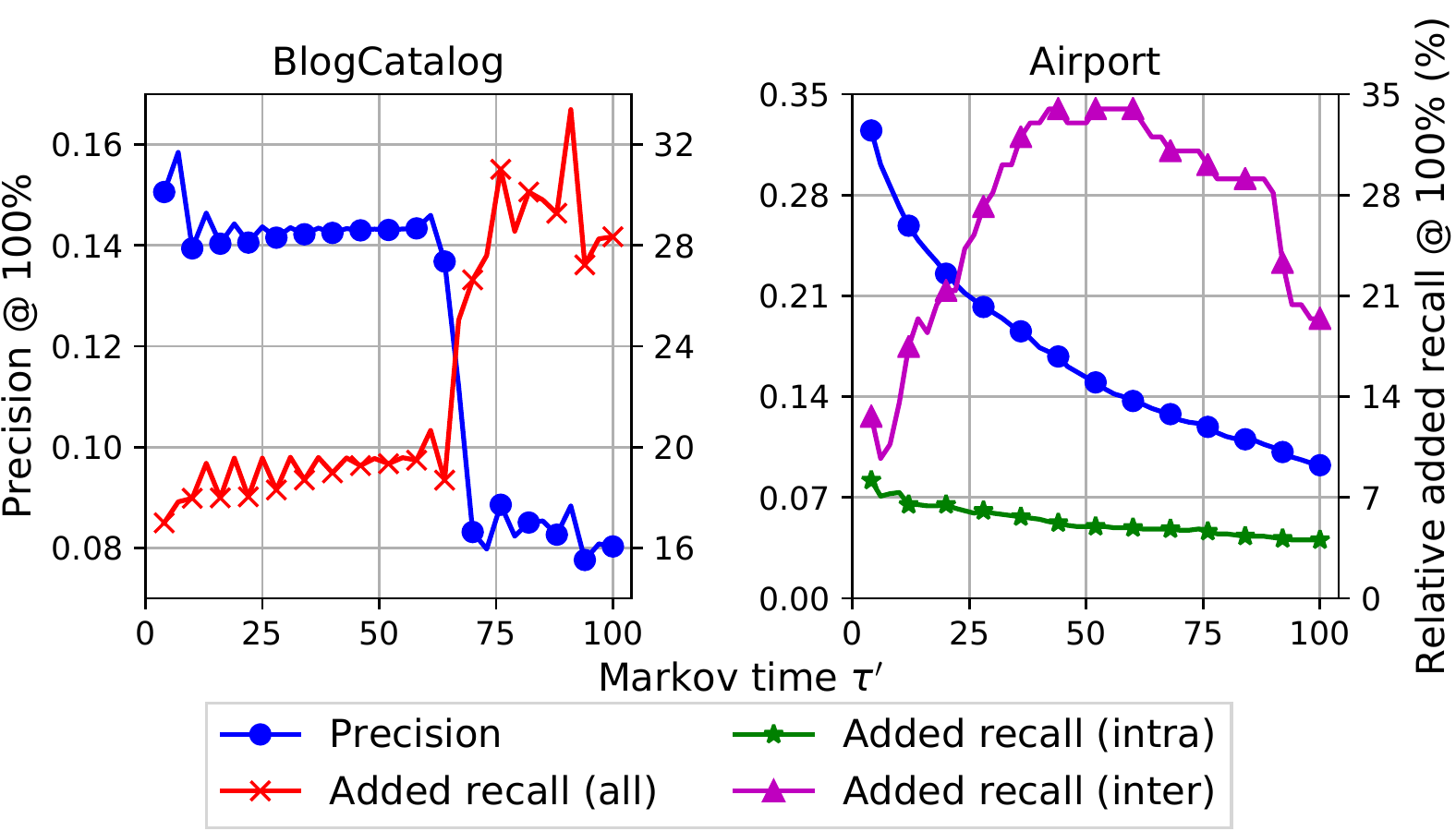}
  \caption{Relative added recall for link prediction after adding predictions from a larger Markov time $\tau'$ to the best Markov time $\tau^*$. While the precision drops at larger Markov times, they predict a distinct set of true (inter-community) edges from those revealed at the best (and small) time.}
  \label{fig::multiscale_link_prediction}
\end{figure}
\subsubsection{Multiscale}
\label{subsubsec::multiscale}

In this section, we analyze the effect of different Markov time scales on the embedding performance using standard random-walks and matrix factorization.

\autoref{fig::multiscale_classification} compares the node classification performance for different similarity measures on varying Markov time scales from 1 to 50. The training ratio is fixed at 50\% for all datasets. The metrics have a peak performance at a certain Markov time. 
However, notice that this aspect has not been given much relevance by previous work on embedding. This is a legacy of DeepWalk \cite{perozzi2014deepwalk}, which, as we have shown, implicitly applies the \emph{log-mean-exp} of the PMI matrix within a range of Markov times. Thus, we also show the performance for the moving \emph{log-mean-exp} PMI and mean autocovariance (it is linear) in \autoref{fig::multiscale_classification}. Interestingly, while both mean versions have smoother results, only \emph{log-mean-exp} PMI has a higher peak performance---with gains of 2.2\%/4.8\% for \textsc{BlogCatalog}, 3.3\%/17.4\% and 0.4\%/0.4\% for \textsc{Airport} with country and continent labels, and 7.1\%/6.7\% for \textsc{Wiki-words}. This shows \emph{log-mean-exp} is indeed an effective approach to combine PMI similarity at different scales. Conversely, we cannot apply a similar strategy for autocovariance.

We observe similar results for community detection (see \autoref{fig::multiscale_community_detection} in the Appendix). Also included are results for Markov Stability \cite{delvenne2010stability}, which applies clustered autocovariance for multiscale community detection. For both countries and continents in \textsc{Airport}, (\emph{log-mean-exp}) PMI achieves the best performance. This is another piece of evidence for the effectiveness of PMI on node-level tasks.

We then evaluate the effect of different Markov time scales on link prediction using \textsc{BlogCatalog} and \textsc{Airport}. We first note that a moving mean does not improve the performance for either PMI or autocovariance (see \autoref{fig::multiscale_link_prediction_bad} in the Appendix). We hypothesize the reason to be that each edge plays a structural role that is specific to a few relevant scales (e.g., connecting two \emph{mid-sized} communities).  
To validate it, we first find the best Markov time $\tau^*$ in terms of \emph{precision@100\%} for autocovariance. Then, for every Markov time $\tau'$ larger than $\tau^*$, we compute its added \emph{recall@100\%}---i.e., the proportion of correctly predicted edges at $\tau'$ that are not predicted at $\tau^*$. We show the relative gain of added recall compared to $\tau^*$ and the $precision@100\%$ for different values of $\tau'$ in \autoref{fig::multiscale_link_prediction}. For \textsc{BlogCatalog}, while precision drops as $\tau'$ increases, the relative added recall increases to up to 33.4\% at $\tau' = 91$ (4,042 new edges added to the 12,104 edges correctly predicted at $\tau^*=3$). 
To further understand the roles of those edges, we show the relative added recall for intra-continent edges and inter-continent edges separately for \textsc{Airport}. Most of the gain is for inter-community edges (up to 34.0\% at $\tau'=42$).
This observation suggests that link prediction can be improved by accounting for the scale of edges.

\subsubsection{Factorization vs sampling}
\label{subsubsec::factorization_vs_sampling}
We now switch our focus to evaluating the performance of sampling and matrix factorization algorithms. For PMI, we apply a publicly available node2vec implementation\footnote{\url{https://github.com/eliorc/node2vec/blob/master/node2vec/node2vec.py}} with parameters that make it equivalent to DeepWalk. 
For autocovariance, we implement our algorithm using the PyTorch framework with Adam optimizer \cite{kingma2014adam}. Parameters are set based on \cite{grover2016node2vec}: 10 walks per node with length 80, 400 epochs for convergence, 1000 walks per batch, and 5 negative samples.

\autoref{fig::factorization_vs_sampling_link_prediction} shows the link prediction performance where the context window size for both PMI and autocovariance is set to the best Markov time for \emph{precision@100\%}. We focus on link prediction because \cite{qiu2018network} has already shown evidence that matrix factorization is superior to sampling for node classification. 
Factorization achieves better performance for both datasets and similarity functions. The average gain of \emph{precision@k} for PMI is 277.0\%/553.7\% on \textsc{BlogCatalog}/\textsc{Airport}, and 240.7\%/121.5\% for autocovariance.

\autoref{fig::correlation} (in the Appendix) shows how dot products of 16-D embeddings for Zachary's karate club generated using sampling and matrix factorization algorithms approximate the entries of similarity matrices. We notice that while embeddings produced by both sampling and factorization approaches are correlated with the similarities, matrix factorization achieves a higher correlation.

\begin{figure}[htbp]
  \centering
  \includegraphics[width=\columnwidth]{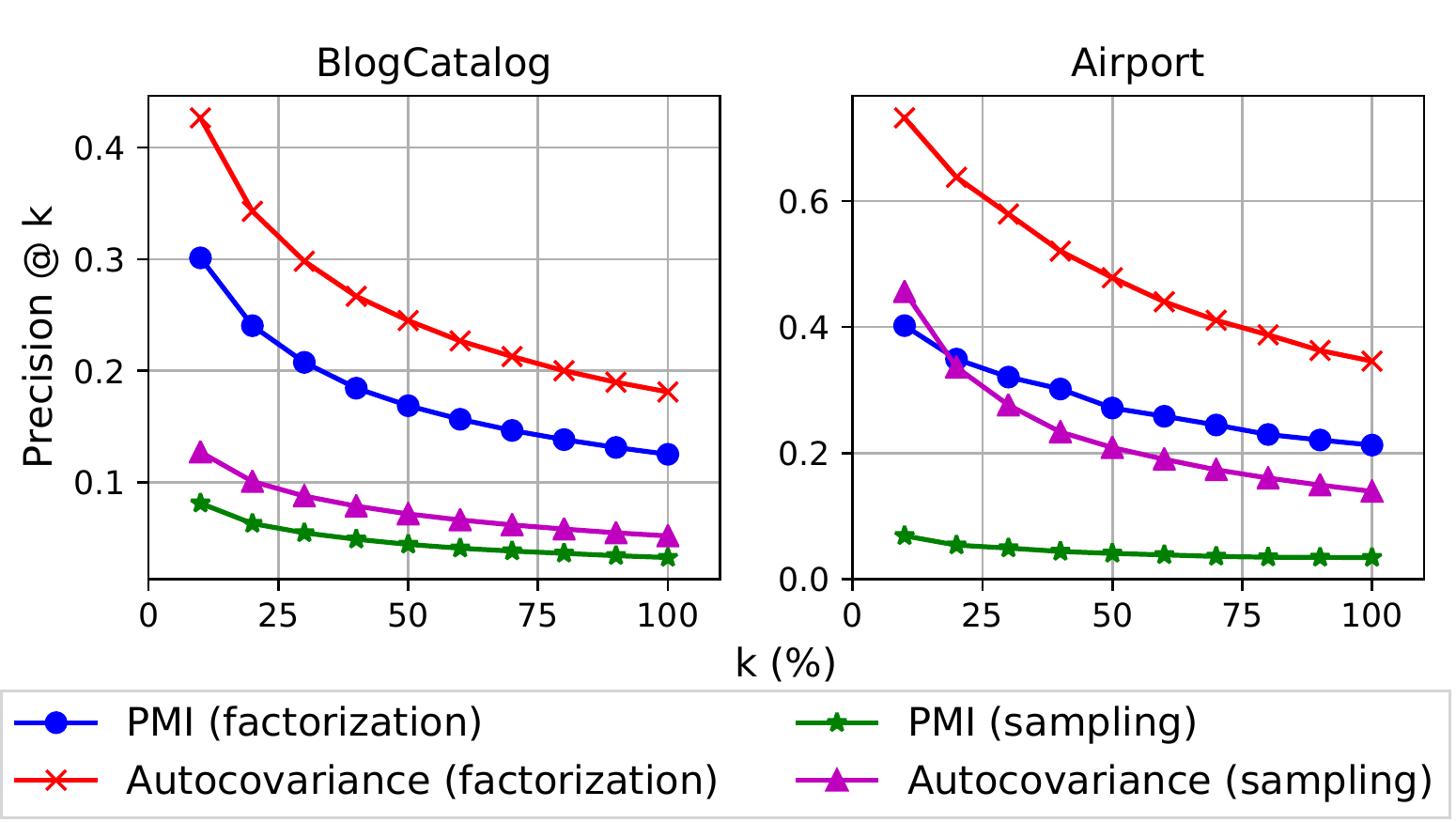}
  \caption{Link prediction performance for PMI and autocovariance using matrix factorization and sampling algorithms on varying percentages of top predictions ($k$). Factorization algorithms achieve the best performance.}
  \label{fig::factorization_vs_sampling_link_prediction}
\end{figure}
\subsubsection{Directed vs undirected}
\label{subsubsec::directed_vs_undirected}
The final part of our evaluation compares embeddings for undirected and directed graphs, which requires different random-walk processes, as discussed in Section \ref{subsec::random-walk}. The results for \textsc{Cora} and \textsc{Wiki-fields} are shown in Figures \ref{fig::directed_vs_undirected_link_prediction} and \ref{fig::directed_vs_undirected_classification} (in the Appendix). Overall, directed embeddings outperform undirected ones for the very top ranked pairs in link prediction, while undirected embeddings are better in node classification.

\section{Insights on PMI vs Autocovariance}
\label{sec::insight}

This section provides both theory and experiments supporting the differences in performance achieved by autocovariance and PMI. We will focus on link prediction, for which autocovariance was shown to achieve the best performance.
Our analysis might benefit the design of novel embedding methods using our framework.

We will assume that the graph $G$ has community structure \cite{girvan2002community} and heterogeneous degree distribution \cite{barabasi2003scale}, which hold for many real graphs and for generalizations of Stochastic Block Models (SBM) \cite{karrer2011stochastic}. Moreover, let the number of dimensions $d$ be large enough to approximate the similarities well. For simplicity, we consider an SBM instance with two clusters and intra-cluster edge probability $p$ significantly higher than the inter-cluster probability $q$.

We will use the dot product setting for link prediction \cite{seshadhri2020impossibility}.
Specifically, we estimate the probability of an edge as $P(e_{u,v}) \propto \max(0,\mathbf{u}^T\mathbf{v})$, where we have conveniently dropped the embedding subscripts. Further, for $p \gg q$, $\mathbf{u}^T\mathbf{v} > 0$ iff $C(u)=C(v)$, where $C(v)$ is the cluster $v$ belongs to. We then have the following observations.

\begin{observation}
Link prediction based on dot products correlates predicted node degrees 
and the 2-norms of embedding vectors:
$$\widetilde{\deg}(u) \approx \sum_{v\in C(u)-\{u\}}\mathbf{u}^T\mathbf{v}=\norm{\mathbf{u}}_2\sum_{v\in C(u)-\{u\}}\norm{\mathbf{v}}_2\cos(\theta_{\mathbf{u},\mathbf{v}})$$ where $\cos(\theta_{u,v})$ is the cosine of the angle between $\mathbf{u}$ and $\mathbf{v}$. 
\end{observation}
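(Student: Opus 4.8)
The plan is to unwind the definition of the predicted degree and then invoke the cluster-separation fact already established for the $p \gg q$ regime. Write $\widetilde{\deg}(u)$ for the degree that the link-prediction model assigns to $u$, i.e.\ the sum of its predicted edge probabilities, $\widetilde{\deg}(u) = \sum_{v \neq u} P(e_{u,v})$. Since $P(e_{u,v}) \propto \max(0, \mathbf{u}^T\mathbf{v})$, this is $\widetilde{\deg}(u) \propto \sum_{v \neq u} \max(0, \mathbf{u}^T\mathbf{v})$, where the hidden proportionality constant is the same global normalization for every node.

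First I would eliminate the $\max$. By the property recalled just above---that under $p \gg q$ one has $\mathbf{u}^T\mathbf{v} > 0$ exactly when $C(u) = C(v)$---every term with $C(v) \neq C(u)$ is clipped to $0$, so the surviving terms are precisely those with $v \in C(u) - \{u\}$, each contributing $\mathbf{u}^T\mathbf{v}$ itself. Hence $\widetilde{\deg}(u) \propto \sum_{v \in C(u) - \{u\}} \mathbf{u}^T\mathbf{v}$, which is the first (approximate) equality; the $\approx$ absorbs both the global constant and the fact that the $O(q)$ genuine inter-cluster edges are discarded---legitimate precisely because $q$ is negligible against $p$.

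Second, for the displayed identity I would simply substitute the geometric form of the inner product, $\mathbf{u}^T\mathbf{v} = \norm{\mathbf{u}}_2 \norm{\mathbf{v}}_2 \cos(\theta_{\mathbf{u},\mathbf{v}})$, and pull $\norm{\mathbf{u}}_2$---which does not depend on the summation index $v$---outside the sum, giving $\norm{\mathbf{u}}_2 \sum_{v \in C(u) - \{u\}} \norm{\mathbf{v}}_2 \cos(\theta_{\mathbf{u},\mathbf{v}})$. That is the claim. To close, I would remark that this is exactly the ``correlation'' asserted: within one community the factors $\norm{\mathbf{v}}_2$ and $\cos(\theta_{\mathbf{u},\mathbf{v}})$ fluctuate far less than $\norm{\mathbf{u}}_2$ does across nodes of widely varying degree, so $\widetilde{\deg}(u)$ essentially scales with $\norm{\mathbf{u}}_2$.

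There is no genuinely hard step here; the content of the observation is the bookkeeping above. The only point needing care is the meaning of ``$\approx$'': it hides the normalization constant of $P(e_{u,v})$ and the dropped inter-cluster edges, and it relies on the embedding faithfully realizing the (sign pattern of the) autocovariance similarity, which is guaranteed by the standing assumption that $d$ is large enough to approximate the similarities well.
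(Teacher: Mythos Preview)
Your proposal is correct and follows the same logic the paper uses: the paper's entire justification is the single sentence ``The above property follows from the community structure, as the majority of the edges will be within communities,'' which is precisely the clipping-by-cluster argument you spell out, followed by the standard inner-product decomposition. You have simply unpacked in detail what the paper leaves implicit.
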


The above property follows from the community structure, as the majority of the edges will be within communities. We now look at the norms of vectors generated by autocovariance and PMI. 

\begin{observation}
For autocovariance similarity:
$$\norm{\mathbf{u}}_2^2 = \pi(u)([M^{\tau}]_{u,u}-\pi(u))=\frac{\deg(u)}{2m}\left([M^{\tau}]_{u,u}-\frac{\deg(u)}{2m}\right)$$
\end{observation}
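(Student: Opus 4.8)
The plan is to connect the embedding vectors produced by the matrix-factorization algorithm directly to the diagonal of the autocovariance matrix, after which the statement becomes a one-line computation. Recall from \autoref{eqn::obj_und_embed} that the factorization embedding minimizes $\norm{UU^T - R}_F^2$, and by the Eckart-Young-Mirsky theorem the optimum is $U^\star = Q_d\sqrt{\Lambda_d}$ read off from the eigendecomposition $R = Q\Lambda Q^T$ with $R = R(\tau)$ the autocovariance matrix. Under the standing assumption of this section that $d$ is large enough to approximate the similarity matrix well, we treat $UU^T$ as coinciding with $R(\tau)$; in particular their diagonals agree, so for the embedding vector $\mathbf{u} = \mathbf{u}_u$ of node $u$ (the $u$-th row of $U$) we get $\norm{\mathbf{u}}_2^2 = \mathbf{u}_u^T\mathbf{u}_u = (UU^T)_{uu} = R_{uu}(\tau)$.

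The remaining steps are purely algebraic. Setting $v=u$ in the matrix form $R(\tau) = \Pi M^{\tau} - \pi\pi^T$ of \autoref{eqn::autocov_matrix} gives $R_{uu}(\tau) = \pi_u[M^{\tau}]_{uu} - \pi_u^2 = \pi_u([M^{\tau}]_{uu} - \pi_u)$, which is exactly the first claimed expression with $\pi(u)=\pi_u$. Substituting the closed form $\pi_u = \deg(u)/\sum_v\deg(v) = \deg(u)/(2m)$ (using $\sum_v\deg(v)=2m$ for an undirected graph) then yields the second expression $\frac{\deg(u)}{2m}\big([M^{\tau}]_{uu} - \frac{\deg(u)}{2m}\big)$.

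The only point that needs care — and the one genuine obstacle — is the identification $UU^T = R(\tau)$. The autocovariance matrix is symmetric but in general indefinite: its spectrum is that of the normalized adjacency matrix raised to the $\tau$-th power with the stationary (top) component subtracted, so a rank-$d$ Gram factorization $UU^T$ can only reproduce its nonnegative eigenspace. The way I would handle this is to note that $\norm{\mathbf{u}}_2^2 = \sum_{i \le d}\lambda_i q_{iu}^2$, which equals $R_{uu}(\tau) = \sum_{i\le n}\lambda_i q_{iu}^2$ exactly whenever the retained top-$d$ eigenvalues capture all the nonnegligible (and in particular all the positive) spectral mass of $R(\tau)$; the discarded small or negative eigenvalues contribute an error term that the ``$d$ large enough to approximate the similarities well'' hypothesis of this section lets us ignore. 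This is precisely why the claim is phrased as an \emph{Observation} rather than an exact \emph{Theorem}, and it is harmless for the community-structured, large-$d$ regime in which the subsequent analysis of degree heterogeneity is carried out.
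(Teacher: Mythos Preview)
Your argument is correct and matches the paper's reasoning. The paper does not supply a standalone proof of this Observation; it is treated as an immediate consequence of the large-$d$ assumption (``let the number of dimensions $d$ be large enough to approximate the similarities well'') together with the definition of the autocovariance matrix, which is exactly the route you take: identify $\norm{\mathbf{u}_u}_2^2$ with the diagonal entry $R_{uu}(\tau)$, read that entry off from \autoref{eqn::autocov_matrix}, and substitute $\pi_u=\deg(u)/(2m)$. Your extra paragraph on the indefiniteness of $R(\tau)$ and the resulting caveat about $UU^T$ only capturing the nonnegative eigenspace is a welcome clarification that the paper leaves implicit.
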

Norms for autocovariance depend on two factors. The first is proportional to the actual node degree, while the second expresses whether $u$ belongs to a community at the scale $\tau$. For SBM, there exists a $\tau$ such that $[M^{\tau}]_{u,u}$ is close to $\deg(u)/m$ in expectation. That is the case when the process is almost stationary within $C(u)$, with $m/2$ expected edges, but not in the entire graph. It implies that the embedding norms are proportional to actual node degrees. 
Combining this with Observation 1, we can conclude that autocovariance embedding predicts degrees related to the actual ones.

\begin{observation}
For PMI similarity:
$$\norm{\mathbf{u}}_2^2 = \log\left(\pi(u)[M^{\tau}]_{u,u}\right)-\log\left(\pi^2(u)\right)=\log\left(\frac{2m[M^{\tau}]_{u,u}}{\deg(u)}\right)$$
\end{observation}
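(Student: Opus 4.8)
The plan is to obtain the identity exactly as Observation 2 is obtained: the claim about the embedding norm is really a claim about the diagonal of the similarity matrix, so it suffices to evaluate \autoref{eqn::pmi_matrix} on the diagonal and simplify. Since $d$ is assumed large enough to approximate the similarities well, the factorization embedding $U = Q_d\sqrt{\Lambda_d}$ satisfies $UU^T \approx R(\tau)$, and in particular $\norm{\mathbf{u}}_2^2 = \mathbf{u}^T\mathbf{u} = [UU^T]_{u,u} \approx R_{uu}(\tau)$; the rest is algebra.

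First I would restrict the PMI matrix $R(\tau) = \log(\Pi M^\tau) - \log(\pi\pi^T)$ to its $(u,u)$ entry. Since $\Pi = \diag(\pi)$, the $(u,u)$ entry of $\Pi M^\tau$ equals $\pi(u)[M^\tau]_{u,u}$ and that of $\pi\pi^T$ equals $\pi(u)^2$, which gives the first equality $\norm{\mathbf{u}}_2^2 = \log(\pi(u)[M^\tau]_{u,u}) - \log(\pi(u)^2)$. Then I would substitute the stationary distribution $\pi(u) = \deg(u)/\sum_v\deg(v) = \deg(u)/(2m)$ and merge the two logarithms:
\[
\log\frac{\pi(u)[M^\tau]_{u,u}}{\pi(u)^2} = \log\frac{[M^\tau]_{u,u}}{\pi(u)} = \log\frac{2m\,[M^\tau]_{u,u}}{\deg(u)},
\]
which is the second equality. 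This step is routine.

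The only genuine obstacle is justifying the identity $\norm{\mathbf{u}}_2^2 = R_{uu}(\tau)$, because the PMI matrix $R(\tau)$ is symmetric but need not be positive semidefinite, so it cannot in general be written exactly as $UU^T$. I would handle this as the surrounding analysis implicitly does: take $U = Q_d\sqrt{\Lambda_d}$ over the $d$ largest \emph{positive} eigenvalues and invoke the SBM-with-$p \gg q$ assumption to argue that the community-scale spectral mass controlling the diagonal is retained by this truncation, so the discarded tail is negligible; at the informal level of Observations 1 and 2 one may instead simply read the identity off the target matrix $R(\tau)$. Finally I would draw the interpretation that makes the observation useful: unlike the autocovariance norm of Observation 2, which is $\propto \deg(u)$ times a community-indicator factor, the PMI norm is governed by the degree-normalized return probability $[M^\tau]_{u,u}/\deg(u)$, which is typically \emph{smaller} for high-degree hubs, so PMI embeddings do not encode actual node degrees --- which, through Observation 1, accounts for their weaker link-prediction performance.
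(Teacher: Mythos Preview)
Your proposal is correct and matches the paper's treatment: the paper states Observation 3 without an explicit derivation, relying on the standing assumption that $d$ is large enough so that $UU^T$ reproduces $R(\tau)$, after which the identity is just the diagonal entry of \autoref{eqn::pmi_matrix} with $\pi(u)=\deg(u)/(2m)$ substituted---precisely the computation you give. Your discussion of the PSD caveat and the interpretive remark are appropriate additions; the paper's own conclusion is the closely related point that once $[M^\tau]_{u,u}\to\deg(u)/m$ the norm $\norm{\mathbf{u}}_2$ becomes (approximately) constant across nodes, hence decoupled from degree.
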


Notice that as $[M^{\tau}]_{u,u}$ approaches to $\deg(u)/m$, $\norm{\mathbf{u}}_2$ becomes constant. As a consequence, different from autocovariance, norms for PMI embeddings are not directly related with node degrees. 

In \autoref{fig::pi_norm_correlation}, we provide empirical evidence for Observations 2 and 3---the correlation between actual degrees and embedding norms for \textsc{BlogCatalog} and \textsc{Airport}. 
The correlation for autocovariance is significantly higher than that for PMI, showing the ability of autocovariance to capture heterogeneous degree distributions.

\autoref{fig::pred_toy_graph} shows the predicted links for an instance of SBM ($p=0.15$, $q=0.02$) with two hubs---generated by merging 20 nodes inside each community. Visualization is based on t-SNE \cite{hinton2003stochastic} projection from 16-D embeddings. 
Around half of the edges (97 out of top 200) are predicted to be connected to the hubs using autocovariance, but none using PMI. 
This example shows how autocovariance with dot product ranking enables state-of-the-art link prediction.  

\begin{figure}
    \centering
    \includegraphics[width=0.45\textwidth]{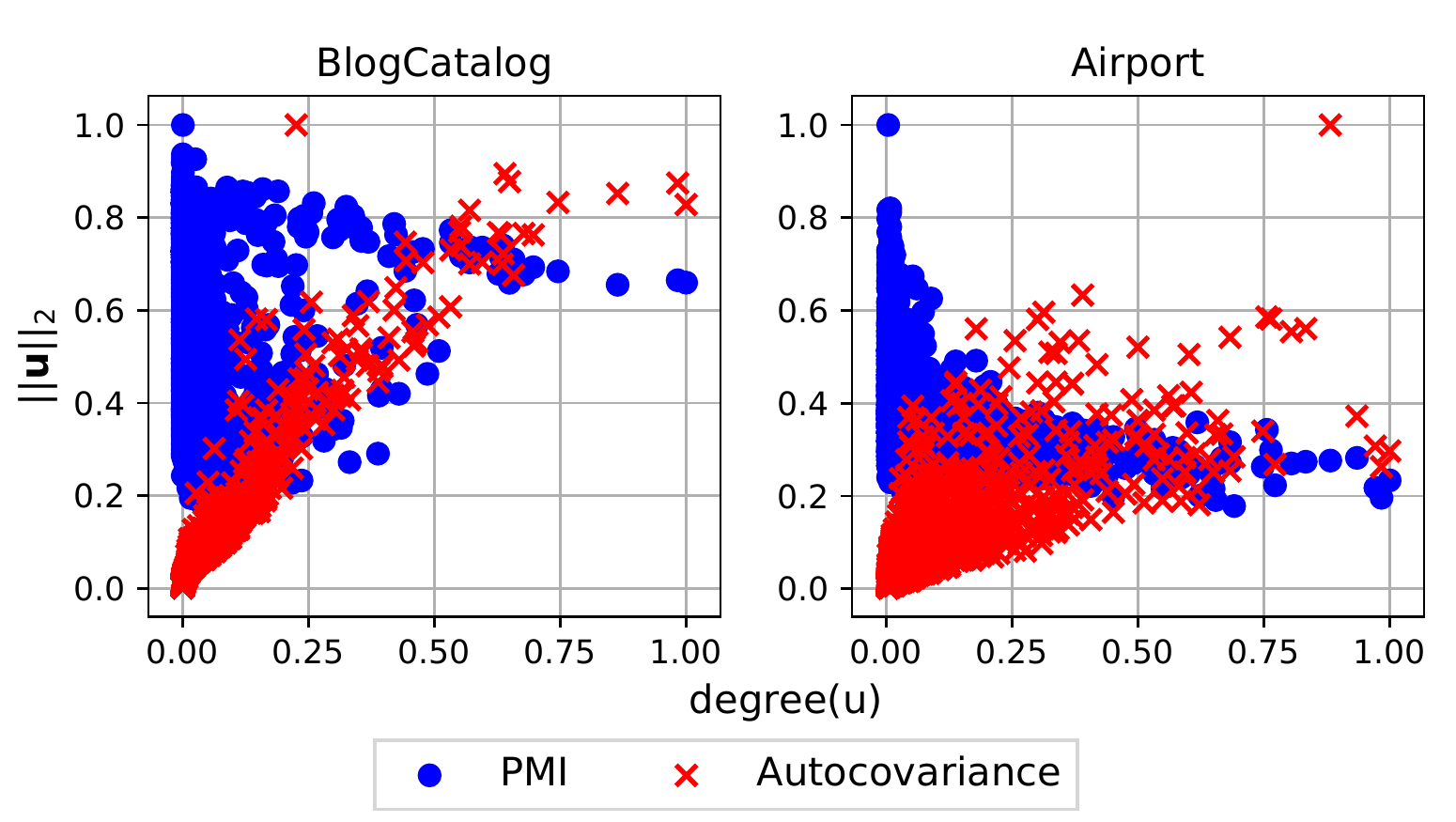}
    \caption{Correlation between (max normalized) degrees and 2-norms of embedding vectors based on PMI and autocovariance. Autocovariance produces embeddings with norms that are well correlated with node degrees in both datasets. \label{fig::pi_norm_correlation}}
\end{figure}
\begin{figure}
    \centering
    \subfloat[PMI]{ \includegraphics[width=0.22\textwidth]{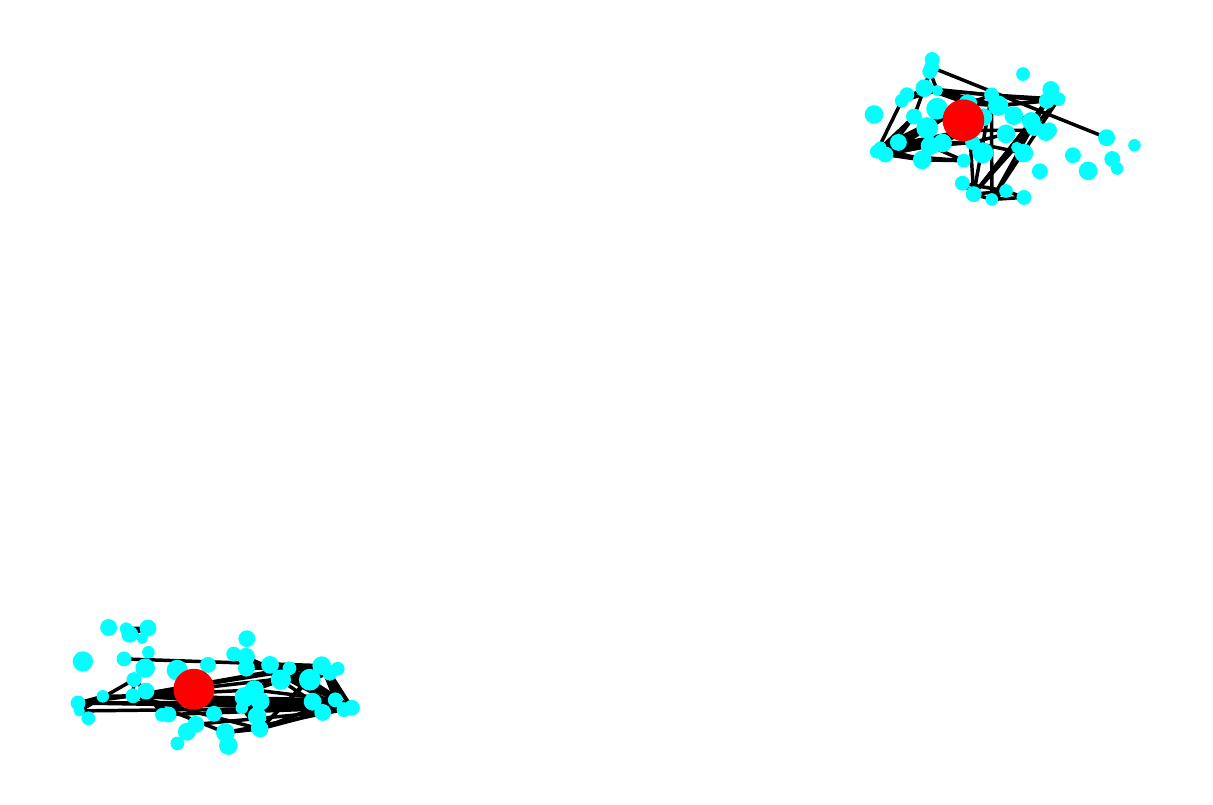}}
    \subfloat[Autocovariance]{ \includegraphics[width=0.22\textwidth]{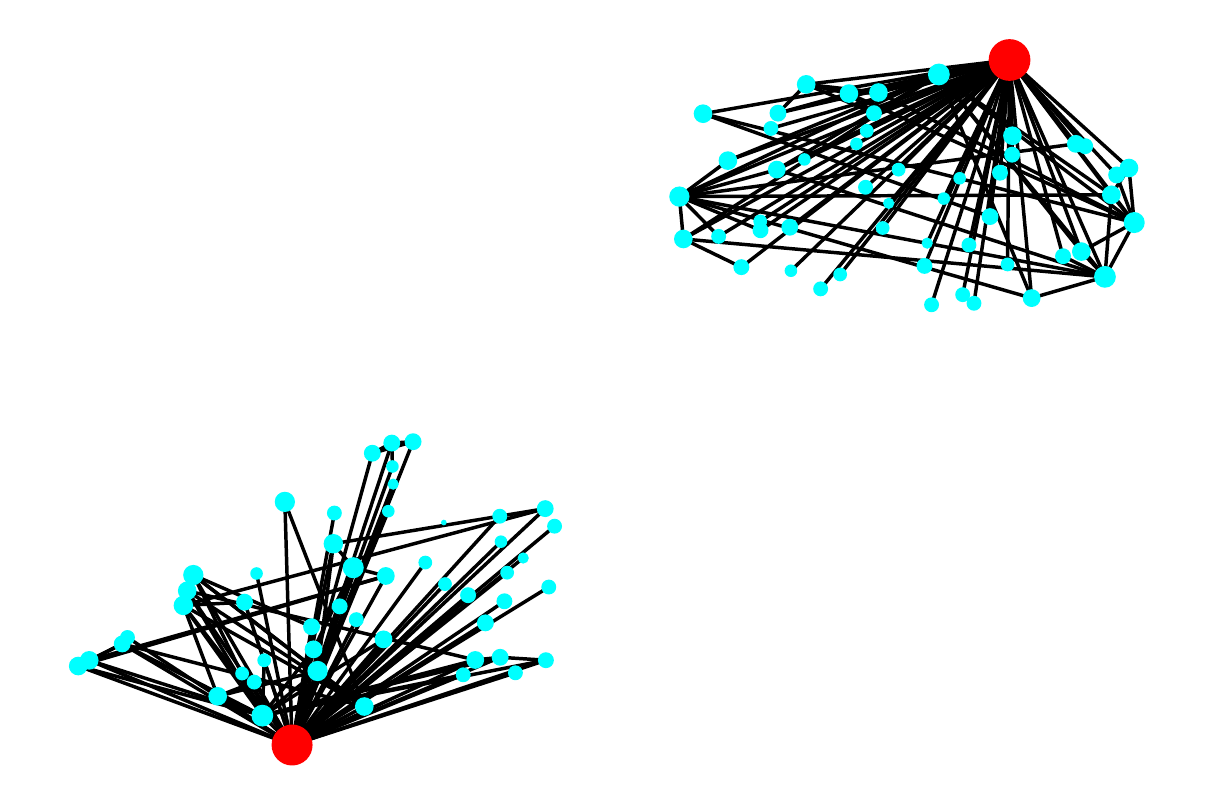}}
    \caption{Edges predicted using PMI and autocovariance embeddings for a synthetic graph with community structure and hubs (one per community, shown in red). Autocovariance is more effective at capturing the hubs in the graph. \label{fig::pred_toy_graph}}
\end{figure}
The analysis presented here does not apply to node-level tasks (node classification and clustering), where the degree distribution does not play a major role. In fact, \autoref{fig::pred_toy_graph} shows that PMI produces a better separation between clusters with hubs for each cluster positioned in the middle, which is desired for node-level tasks. This property might be explained by the non-linearity of PMI. 

\section{Related Work}
\label{sec::related_work}
We refer to  \cite{hamilton2017representation,cui2018survey,cai2018comprehensive,goyal2018graph,chami2020machine} for an overview of graph embedding and representation learning. 
The general problem of embedding vertices of a graph into a vector space can be traced back to much earlier work on embedding metric spaces \cite{bourgain1985lipschitz}, spectral graph theory \cite{chung1997spectral}, nonlinear dimensionality reduction \cite{belkin2002laplacian} and graph drawing \cite{diaz2002survey}. However, the more recent appeal for graph embedding methods coincided with the renewed interest in deep learning, specially the skip-gram model for text, such as word2vec \cite{mikolov2013distributed}. DeepWalk \cite{perozzi2014deepwalk}, which introduced the idea of using random-walks as graph counterparts of sentences in skip-gram, is considered the pioneering work on random-walk based graph embedding. Subsequent extensions have led to an extensive literature \cite{grover2016node2vec,tang2015line,ou2016asymmetric,zhou2017scalable}. Random-walk based embedding models have also been proposed for heterogeneous \cite{dong2017metapath2vec}, dynamic \cite{du2018dynamic} and multilayer networks \cite{zhang2018scalable}.

Previous work has proposed embeddings based on specific processes. For instance, node2vec \cite{grover2016node2vec} applies a biased random-walk process that is flexible enough to capture homophily and structural similarity. In \cite{zhou2017scalable}, the authors propose a variation of DeepWalk with PageRank \cite{page1999pagerank} instead of the standard random-walks. These studies motivate generalizing embedding to other processes, such as the Ruelle-Bowen random-walk, which has maximum entropy \cite{ruelle2004thermodynamic}, and even continuous-time random-walks \cite{nguyen2018continuous,lambiotte2014random}.

With the exception of \cite{schaub2019multiscale}, random-walk based embeddings cannot generalize---either explicitly or implicitly---to other similarities besides Pointwise Mutual Information (PMI) \cite{qiu2018network}. As an alternative, autocovariance similarity is the basis of Markov Stability \cite{delvenne2010stability,delvenne2013stability,lambiotte2014random}, an effective multiscale community detection algorithm validated in many real networks. In \cite{schaub2019multiscale}, the authors introduced a general embedding scheme based on control theory, of which autocovariance is a special case. However, they neither contextualized it with other existing embedding approaches nor provided any theoretical or empirical evaluation on any particular task. A link prediction algorithm based on autocovariance was introduced in \cite{gao2019link}, but it does not produce embeddings. Integrating autocovariance into the broader framework of random-walk embedding---with different processes and embedding algorithms---and investigating its properties both theoretically and empirically is a contribution of our work.

For some time, graph embedding models were divided into those based on skip-gram \cite{perozzi2014deepwalk,grover2016node2vec,tang2015line}, which are trained using sampled walks and stochastic gradient descent (or its variations) and those based on matrix factorization \cite{ou2016asymmetric,cao2015grarep}. It was later shown that, under certain assumptions, skip-gram based text embedding models perform matrix factorization implicitly \cite{levy2014neural}. 
More recently, Qiu et al. \cite{qiu2018network} showed that DeepWalk, node2vec and LINE \cite{tang2015line} also perform implicit matrix factorization. Our framework incorporates both sampling-based and matrix factorization algorithms and we show how the former can generalize to similarities beyond PMI.

Multiscale graph embedding based on random-walks has attracted limited interest in the literature \cite{chen2017harp,xin2019marc}, though many real-life networks are known to have a multiscale (or hierarchical) structure \cite{ravasz2003hierarchical,clauset2008hierarchical}. Notice that we focus on graphs where scales are defined by clusters/communities. This is different from graphs where vertices belong to different levels of a hierarchy, such as trees, for which hyperbolic embedding \cite{nickel2017poincare} is a promising approach.

\section{Conclusion}
\label{sec::conclusions}

We have introduced a framework that provides a renewed bearing on random-walk based graph embedding. This area has capitalized on the close connection with skip-gram but has also been biased by some of its design choices. Our framework has enabled us to scrutinize them, which will benefit researchers and practitioners alike. 
In the future, we want to explore alternative choices of components in our framework, as well as extending it to graphs with richer information, such as signed, dynamic, and attributed graphs.

\begin{acks}
This work is partially funded by NSF via grant IIS 1817046 and DTRA via grant HDTRA1-19-1-0017.
\end{acks}

%

\clearpage
\bibliographystyle{ACM-Reference-Format}
\bibliography{reference}
\clearpage
\appendix

\section{Appendix}

\subsection{Proof of the Relationship between PMI Similarity and Skip-gram Based Models}\label{subsec::relationship}
Here, we show the close relationship between the PMI similarity and existing embedding models. As proved in \cite{qiu2018network}, LINE and DeepWalk implicitly factorize the following matrices: 
\begin{align}
    R_{LINE} = \log (\vol(\mathcal{G}) D^{-1}AD^{-1}) - \log b\\
    R_{DW} = \log (\vol(G) (\frac{1}{T}\sum_{r=1}^T (D^{-1}A)^r) D^{-1}) - \log b
\end{align}
where $b$ and $T$ are the number of negative samples and the context window size in skip-gram respectively, and $\vol(\mathcal{G}) = \sum_{u} \deg(u)$. Now, consider the standard random-walk where $M = D^{-1}A$ and $\Pi = D/\vol(\mathcal{G})$. We notice the following equivalence:
\begin{lemma}
The PMI similarity matrix $R(\tau)$ in \autoref{eqn::pmi_matrix} for the standard random-walk can be expressed as
\begin{equation}
    R(\tau) = \log(\vol(\mathcal{G})(D^{-1}A)^{\tau}D^{-1})
\end{equation}
\end{lemma}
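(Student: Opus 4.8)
The statement is an identity between two $n\times n$ matrices, and since $\log(\cdot)$ denotes the element-wise logarithm, it suffices to verify equality of the arguments of the logarithm entry by entry. The plan is to start from \autoref{eqn::pmi_matrix}, $R(\tau)=\log(\Pi M^{\tau})-\log(\pi\pi^T)$, rewrite it as $R_{uv}(\tau)=\log\!\big((\Pi M^{\tau})_{uv}/(\pi_u\pi_v)\big)$, and show that the fraction inside equals $\vol(\mathcal G)\,\big((D^{-1}A)^{\tau}D^{-1}\big)_{uv}$.

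First I would expand $(\Pi M^{\tau})_{uv}=\pi_u\,(M^{\tau})_{uv}$, using that $\Pi=\diag(\pi)$. Then I would substitute the stationary distribution of the standard random-walk, $\pi_u=\deg(u)/\vol(\mathcal G)$, equivalently $\Pi = D/\vol(\mathcal G)$, into both the numerator and the denominator $\pi_u\pi_v=\deg(u)\deg(v)/\vol(\mathcal G)^2$. The factor $\pi_u$ cancels, leaving
\begin{equation*}
\frac{(\Pi M^{\tau})_{uv}}{\pi_u\pi_v}
= \frac{(M^{\tau})_{uv}}{\pi_v}
= \vol(\mathcal G)\,\frac{(M^{\tau})_{uv}}{\deg(v)}
= \vol(\mathcal G)\,\big(M^{\tau}D^{-1}\big)_{uv},
\end{equation*}
where the last step uses that right-multiplication by the diagonal matrix $D^{-1}$ scales column $v$ by $1/\deg(v)$. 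Finally, substituting $M=D^{-1}A$ gives $M^{\tau}D^{-1}=(D^{-1}A)^{\tau}D^{-1}$, so the fraction equals $\vol(\mathcal G)\big((D^{-1}A)^{\tau}D^{-1}\big)_{uv}$; taking the element-wise logarithm of both sides over all $u,v$ yields the claimed matrix identity.

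There is no real obstacle here—the argument is a direct substitution and cancellation. The only points that need care are (i) keeping track that $\log$ and the division are element-wise, not matrix, operations, so that the rearrangement $\log(\Pi M^\tau)-\log(\pi\pi^T)=\log\big(\Pi M^\tau \oslash \pi\pi^T\big)$ is legitimate; and (ii) correctly interpreting right-multiplication by $D^{-1}$ as column scaling. With these in mind the lemma follows immediately, and it then plugs directly into the expressions for $R_{LINE}$ and $R_{DW}$ from \cite{qiu2018network} quoted above, establishing that LINE and DeepWalk implicitly factorize $R(1)-\log b$ and $\log\!\big(\tfrac1T\sum_{\tau=1}^T\exp(R(\tau))\big)-\log b$, respectively.
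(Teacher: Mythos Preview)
Your proposal is correct and follows essentially the same approach as the paper: both arguments observe that the row factor $\pi_u$ cancels between $\Pi M^{\tau}$ and $\pi\pi^T$, leaving $M^{\tau}$ divided elementwise by $\pi_v$, which is the matrix product $M^{\tau}\Pi^{-1}=\vol(\mathcal G)(D^{-1}A)^{\tau}D^{-1}$. The only cosmetic difference is that the paper carries out the cancellation in matrix elementwise notation ($\circ$, $\oslash$) while you do it entrywise.
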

\begin{proof}
We use $\circ$ and $\oslash$ to represent elementwise multiplication and division between matrices. We have:
\begin{equation}
    \begin{aligned}
    R(\tau) & = \log (\Pi M^{\tau}) - \log(\pi \pi^T) \\
    & = \log ( (\pi, \dotsc, \pi) \circ M^{\tau}) - \log ( (\pi, \dotsc, \pi) \circ (\pi, \dotsc, \pi)^T) \\
    & = \log (M^{\tau}) - \log ((\pi, \dotsc, \pi)^T) \\
    & = \log (M^{\tau} \oslash (\pi, \dotsc, \pi)^T )\\
    & = \log(M^{\tau} \Pi^{-1}) \\ 
    & = \log(\vol(\mathcal{G})(D^{-1}A)^{\tau}D^{-1})
    \end{aligned}
\end{equation}
\end{proof}
Thus, $R_{LINE}$ factorizes a shifted version of the similarity at $\tau\!=\!1$:
\begin{equation}
    R_{LINE} = R(1) - \log b
\end{equation}

And $R_{DW}$ factorizes a smooth approximation of the average (\emph{log-mean-exp}) for the shifted similarity with $t$ from $1$ to $T$:
\begin{equation}
    R_{DW} = \log \left(\frac{1}{T} \sum_{\tau=1}^T \exp(R(\tau))\right) - \log b
\end{equation}

\subsection{Proof of Theorem \ref{thm::sampling_stab}}
\begin{proof}
We first show that, for $b=1$, Equations \ref{eqn::density_autocov_one} and \ref{eqn::density_autocov_zero}
can be derived from the definition of autocovariance and the Bayes Theorem. The corpus $\mathcal{D}$ is composed of samples from the random-walk process, and thus, for autocovariance-based embeddings:
\begin{equation}
\begin{aligned}
    p(u,v|z=1) &= \pi_u p(x(t{+}\tau){=}v|x(t){=}u))\\
                    &= \mathbf{u}_u^T\mathbf{v}_v^{} + \pi_u\pi_v
\end{aligned}
\end{equation}
For pairs that are not in the corpus (negative samples):
\begin{equation}
    p(u,v|z=0) = \pi_u\pi_v
\end{equation}
Because $b\!=\!1$, $p(z\!=\!0)\!=\!p(z\!=\!1)$, and $p(u,v)\!=\!  \mathbf{u}_u^T\mathbf{u}_v^{}\!+\!2\pi_u\pi_v$. From the Bayes Theorem, we get \autoref{eqn::density_autocov_one} and \autoref{eqn::density_autocov_zero}.
We will assume that \autoref{eqn::auto_cov_pair_neg} can be computed  exactly from samples:
\begin{equation}
\mathbf{u}_u^T\mathbf{v}_v^{} = \frac{\#(u,v)}{b|\mathcal{D}|} - \frac{\#(u)\#(v)}{|\mathcal{D}|^2}
\label{eqn::to_prove_sampling}
\end{equation}
Moreover, similar to \cite{levy2014neural}, we will simplify \autoref{eqn::sampling_likelihood} as follows:
\begin{equation*}
\begin{aligned}
\ell\!=&\!\sum_{u,v}\!\#(u,v)\log(p(z\!=\!1|u,v))\!+\!b\sum_{u,v}\!\#(u,v)\mathbb{E}_{w\sim \pi}\log(p(z\!=\!0|u,w))\\
=&\sum_{u,v} \#(u,v)\log(p(z\!=\!1|u,v))+b\sum_{u} \#(u)\frac{\#(v)}{|\mathcal{D}|}\log(p(z\!=\!0|u,v))\\
&+b\sum_{u,w\neq v} \#(u)\frac{\#(w)}{|\mathcal{D}|}\log(p(z\!=\!0|u,w))
\end{aligned}
\end{equation*}
where $\#(v)=\sum_{w}\#(v,w)$.

For a large enough number of dimensions, we can minimize the above Equation for each pair $(u,v)$:
\begin{equation}
\ell_{u,v} = \#(u,v)\log(p(z\!=\!1|u,v))+b\#(u)\frac{\#(v)}{|\mathcal{D}|}\log(p(z\!=\!0|u,v))
\label{eqn::pairwise_loss}
\end{equation}
Now, we plug the conditional probabilities $p(z\!=\!1|u,v)$ and $p(z\!=\!0|u,v)$ from \autoref{eqn::density_autocov_one} and \autoref{eqn::density_autocov_zero} into $\ell_{u,v}$, and compute its derivative with respect to $x=\mathbf{u}_u^T\mathbf{v}_v^{}$, and we have
\begin{equation*}
\begin{aligned}
\frac{\partial \ell_{u,v}}{\partial x} &= \frac{-\#(u,v)\pi_u\pi_v}{(x+\pi_u\pi_v)(x+(b+1)\pi_u\pi_v)}+\frac{b\#(u)\#(v)}{|\mathcal{D}|(x+(b+1)\pi_u\pi_v)}
\end{aligned}
\end{equation*}
where we have conveniently dropped the function $\rho$.

Setting the derivative to zero, we get \autoref{eqn::to_prove_sampling}. We emphasize that similar to \cite{levy2014neural}, our theorem only holds when $d$ is large enough to allow embeddings to be optimal pairwise, which can only be guaranteed in the general case when $d=n$. 
\end{proof}
\subsection{Additional Figures}

\autoref{fig::multiscale_community_detection} and \autoref{fig::multiscale_link_prediction_bad} (both referred in Section \ref{subsubsec::multiscale}) show community detection and link prediction performance for varying Markov time. In \autoref{fig::directed_vs_undirected_link_prediction} and \autoref{fig::directed_vs_undirected_classification} (both referred in Section \ref{subsubsec::directed_vs_undirected}) we show link prediction and node classification results for directed and undirected embeddings with PageRank. And \autoref{fig::correlation} (referred in Section \ref{subsubsec::factorization_vs_sampling}) shows the correlation between entries of similarity matrices and the reconstruction from the embeddings.

\begin{figure}[H]
  \centering
  \includegraphics[width=\columnwidth]{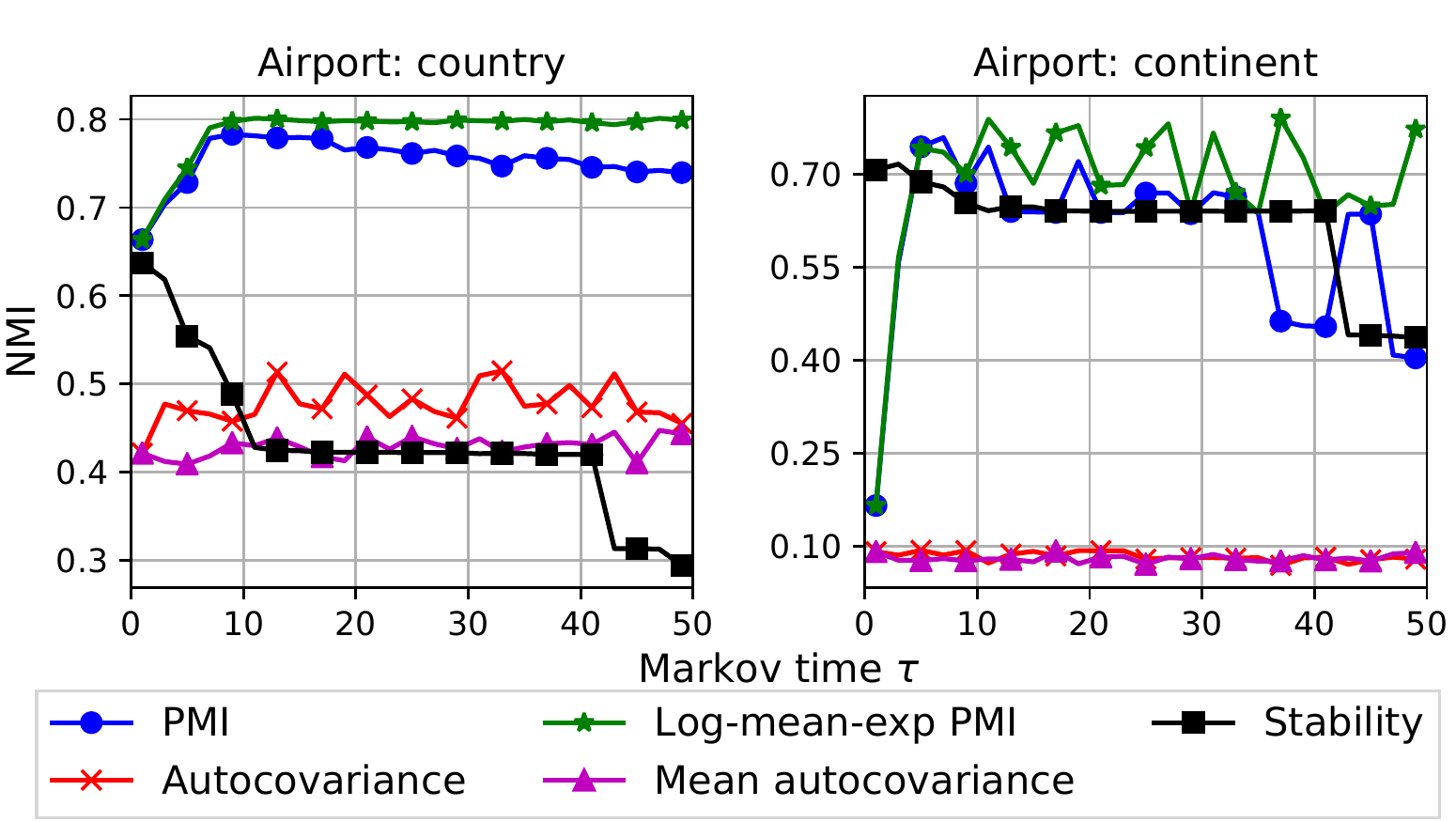}
  \caption{
  Community detection for PMI, autocovariance, their moving means, and Markov Stability \cite{delvenne2010stability} on varying Markov times. 
  \emph{Log-mean-exp} PMI outperforms autocovariance and Markov Stability for country and continent levels.
  }
  \label{fig::multiscale_community_detection}
\end{figure}
\clearpage

\begin{figure*}[!htb]
\begin{multicols}{2}
  \centering
  \includegraphics[width=\columnwidth]{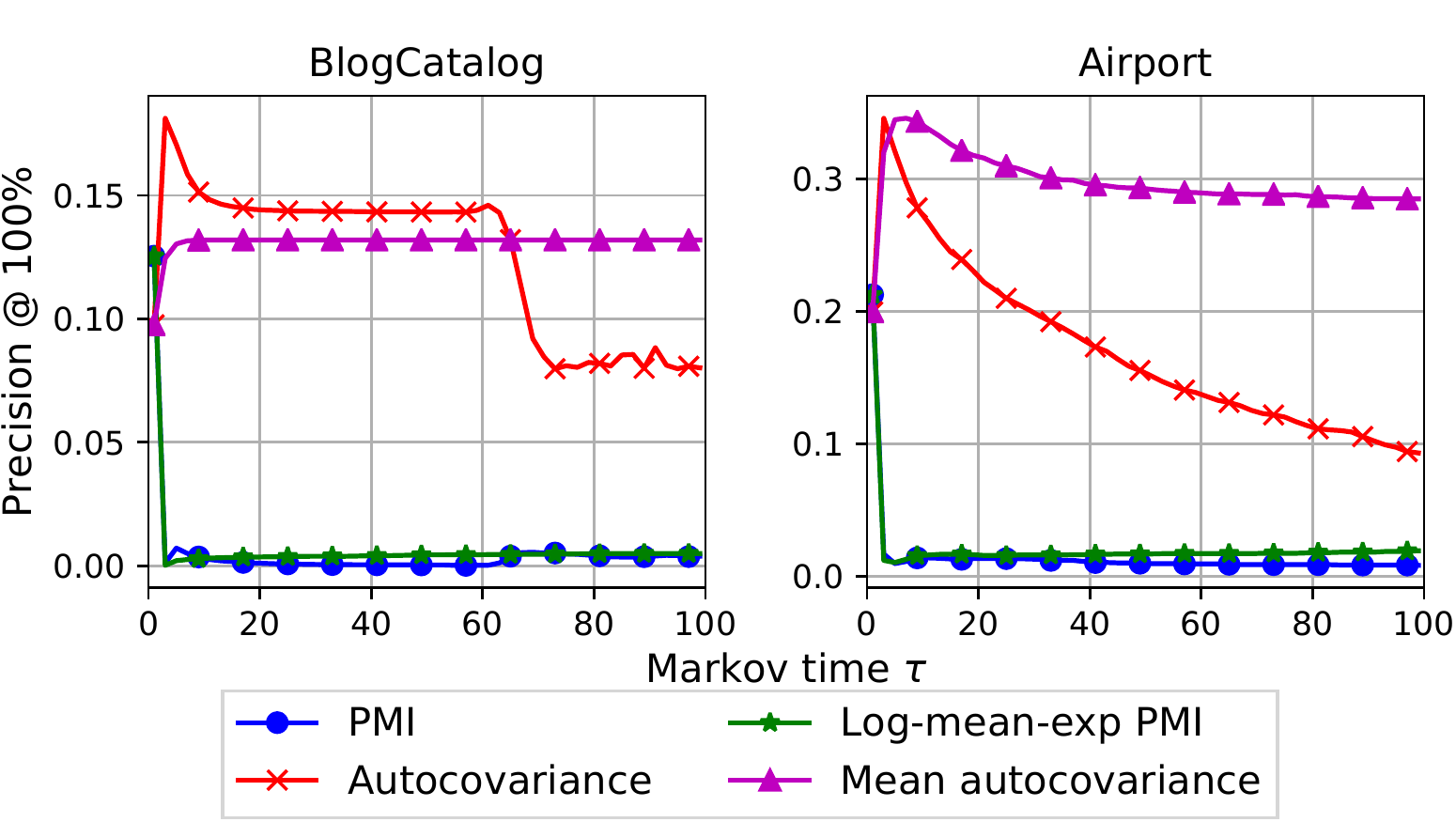}
  \caption{Link prediction for PMI, autocovariance, and their moving means on varying Markov times. 
  Neither \emph{log-mean-exp} PMI nor mean autocovariance increases the peak performance. 
  }
  \label{fig::multiscale_link_prediction_bad}
  
  \centering
  \includegraphics[width=\columnwidth]{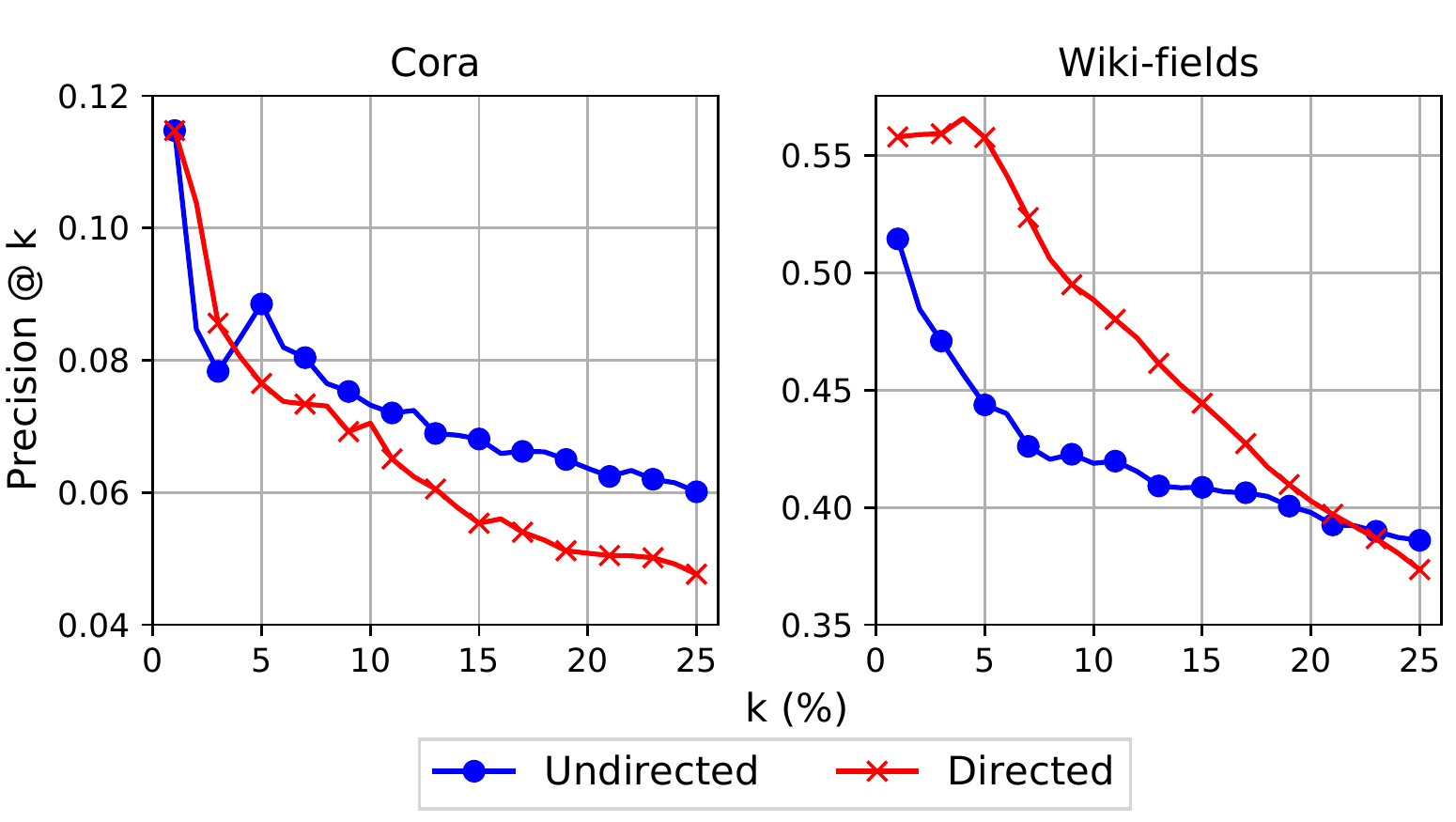}
  \caption{Link prediction for directed and undirected embeddings with PageRank on varying top pairs. 
  Directed embedding outperforms undirected embedding for the very top ranked edges. 
  }
  \label{fig::directed_vs_undirected_link_prediction}
\end{multicols}

  \centering
  \includegraphics[width=\textwidth]{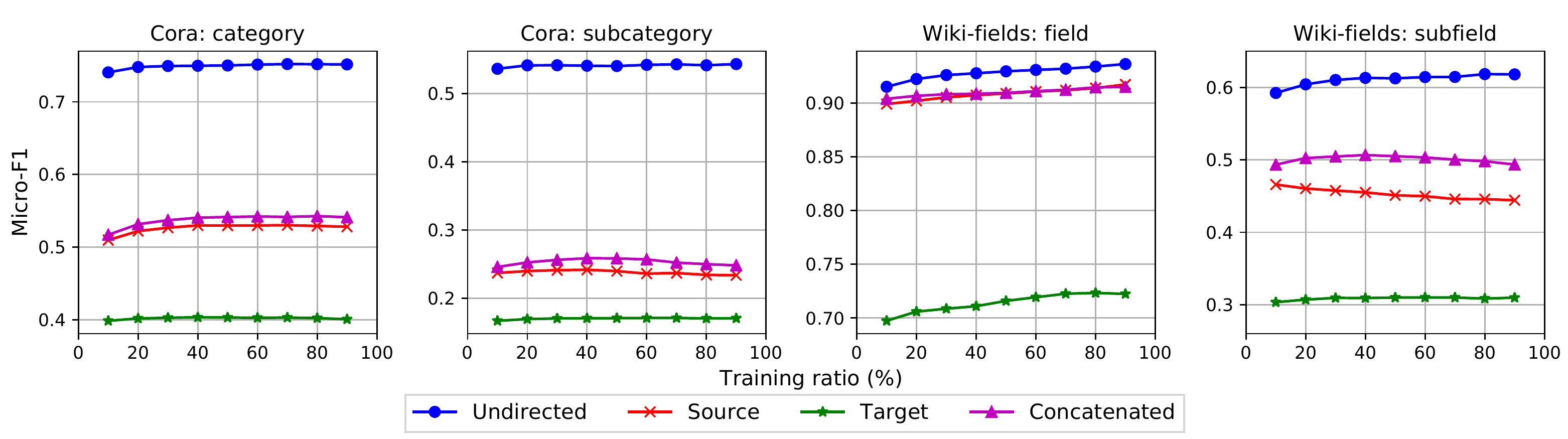}
  \caption{Node classification for directed and undirected embeddings with PageRank on varying training ratios. \emph{Macro-F1} scores are not shown here as they follow similar patterns as \emph{Micro-F1} scores.
  The undirected embedding consistently outperforms both source, target and concatenated directed embeddings in both datasets.
  }
  \label{fig::directed_vs_undirected_classification}
\centering
\includegraphics[width=\textwidth]{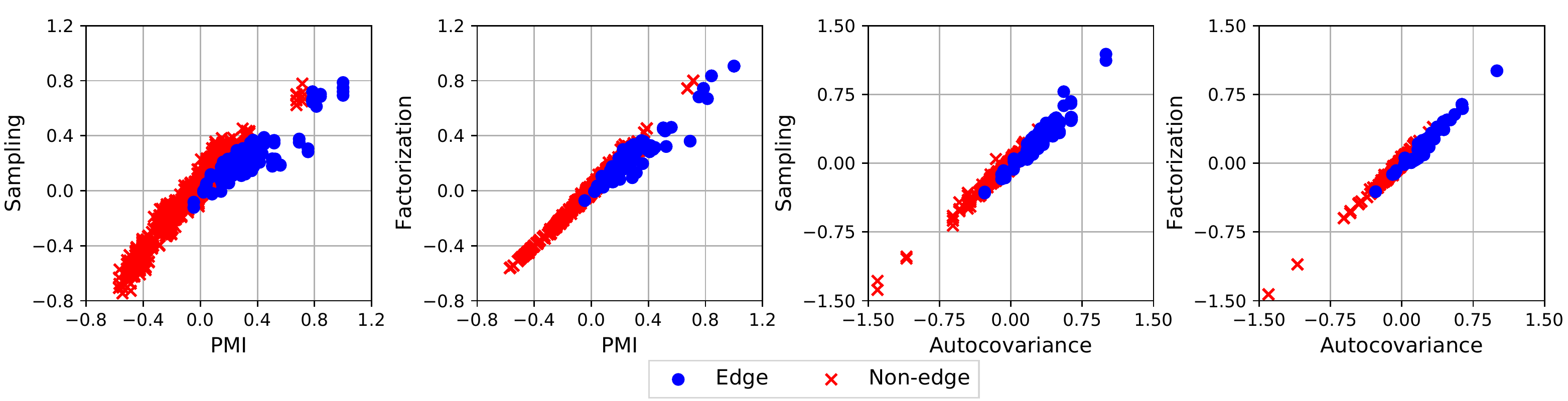}

\caption{Correlation between entries of the similarity matrices (PMI and autocovariance) and the corresponding dot product reconstruction from the embeddings generated via sampling and matrix factorization algorithms. The results are generated using Zachary's karate club network. Both edge and non-edge pairs of nodes are shown. While a clear correlation can be noticed in all cases, matrix factorization methods provide a better approximation of the similarity metrics. \label{fig::correlation}
}
\end{figure*}

\end{document}